\documentclass{article}

\PassOptionsToPackage{numbers}{natbib}

\usepackage[preprint]{neurips_2026}

\usepackage[utf8]{inputenc}
\usepackage[T1]{fontenc}
\usepackage{hyperref}
\usepackage{url}
\usepackage{booktabs}
\usepackage{amsmath,amssymb,amsfonts,amsthm,mathtools}
\usepackage{enumitem}
\usepackage{microtype}
\usepackage{xcolor}
\usepackage{array}
\usepackage{nicefrac}

\newtheorem{theorem}{Theorem}[section]
\newtheorem{lemma}[theorem]{Lemma}

\newtheorem{fact}[theorem]{Fact}
\newtheorem{definition}[theorem]{Definition}

\newcommand{\ACzero}{\mathrm{AC}^0}
\newcommand{\TCzero}{\mathrm{TC}^0}
\newcommand{\PCR}{\mathrm{PCR}}

\newcommand{\zo}{\{0,1\}}

\newcommand{\YES}{\mathsf{YES}}
\newcommand{\NO}{\mathsf{NO}}
\newcommand{\MARK}{\mathsf{MARK}}
\newcommand{\BLANK}{\mathsf{BLANK}}
\newcommand{\BITZERO}{\mathsf{0}}
\newcommand{\BITONE}{\mathsf{1}}

\newcommand{\DUMMY}{\mathsf{DUMMY}}

\title{Provably Shorter Scratchpads in Hybrid DeltaNet-Attention Decoders}
\author{Tomasz Steifer\\ 
Centre for Credible AI\\ Warsaw University of Technology\\
Rektorska 4, Warszawa 00-614, Poland}

\begin{document}
\maketitle

\begin{abstract}
We investigate the expressive power of hybrid recurrent-attention decoders, a class of architectures used in recent open-source language models such as Qwen3-Next and its successors. These models combine Gated Attention heads with recurrent Gated DeltaNet heads. Is there a formal advantage, in terms of model expressivity or efficiency, to such a hybrid architecture? We show that there is. We define parity-conditioned retrieval task and show that under constant-precision assumption, a Qwen-style hybrid of Gated DeltaNet and Gated Attention solves this task with a constant scratchpad, or equivalently $O(1)$ chain-of-thought steps. In contrast, no similar solution exists for pure Gated DeltaNet models, while pure Gated Attention requires at least a polynomial scratchpad.
\end{abstract}

\section{Introduction}

For almost a decade transformers have been a cornerstone of modern language models but recent years brought a surge of interest in recurrent mechanisms such as in Mamba or DeltaNet. Both approaches to language processing have their pros and cons. For instance, attention can track relations between distant tokens with little cost but has a growing key-value cache. Moreover, standard transformers can struggle with simple formal tasks such as deciding whether the number of ones in the prompt is even or odd, unless allowed to produce large amounts of chain-of-thought tokens. Recurrent scans such as DeltaNet maintain a fixed-size state, which is efficient for online state tracking but cannot retain an arbitrarily large memory. This suggests hybrid architectures as a way to combine the best of the two worlds. Qwen3-Next is a motivating example: its public model card describes a 48-layer hybrid layout that alternates runs of Gated DeltaNet blocks with Gated Attention blocks~\citep{qwen3next_modelcard}.  

This paper contributes to the study of the expressivity of such hybrid models. Our main question is following: are hybrid models more expressive? If yes, then under what assumptions? To answer that we give a natural example of a formal task which is easy for a Qwen-inspired hybrid architecture but is hard for decoders that use only Gated Attention or Gated DeltaNet layers.

The task in question is as follows. The model reads a bit string \(Y=(Y_1,\ldots,Y_n)\), followed by a sequence encoding an index \(j\).  It must output
\[
  Y_j\oplus(Y_1\oplus\cdots\oplus Y_n).
\]
This is a toy example of a more general template. The language model is given some kind of text (e.g. code, server logs, chat context) and a query conditioned on some global property of the text. For instance, an agent is asked to list all the events that occurred while the user was logged in.
Informally speaking, the computation has two independent subproblems:
\begin{enumerate}
  \item a serial scan subtask, namely the parity of the input sequence; and
  \item a delayed retrieval subtask, namely addressed retrieval of the queried bit \(Y_j\) after the query is revealed.
\end{enumerate}
A recurrent DeltaNet head can solve the first subproblem at constant precision, and an attention head can solve the second using positional encodings.  Either component alone is forced into a tradeoff: pure attention needs scratchpad (chain-of-thought tokens) to compute parity, while a pure DeltaNet must compress the input sequence into its rounded state before it sees the query.

\paragraph{Contribution.}
We formalize the Qwen-style hybrid decoder under a practically motivated constant-precision assumption.
\begin{enumerate}[leftmargin=2em]
  \item We define Parity-Conditioned Retrieval (\(\PCR\)), a toy task that combines online parity tracking with delayed queried retrieval.

  \item We give an explicit construction of a hybrid decoder (with constant depth and logarithmic model dimension) that solves the task: a two-coordinate positive-gate Gated DeltaNet row tracks parity, and two finite-precision attention heads retrieve the query address and then the selected bit. The hybrid solution works in a single iteration---no scratchpad / chain-of-thought generation.
  \item We observe that two matching lower bounds hold for decoders limited to only one kind of heads. Under constant precision, Pure Gated Attention cannot compute parity unless it is allowed to generate a scratchpad of a size polynomial in the input length $N$. Pure Gated DeltaNet with \(o(N)\) bits of rounded recurrent state fails, even if it may generate arbitrarily many scratch tokens after the prompt.
\end{enumerate}

\paragraph{Precision convention}
The exact statement of our results depends on the adopted precision convention. As a side-note, it seems that a similar separation should hold for a weaker convention (rounding after each operation), perhaps at the cost of a faster scaling model dimension. This conjecture is not verified or formalized in the current paper. See section 3.2. for more details on this convention.

\paragraph{AI tools declaration}
ChatGPT (versions 5.4-5.5) was used during proof exploration to suggest candidate proof strategies, check informal derivations, and help identify issues in failed proof attempts. The authors independently verified all formal statements and proofs. ChatGPT and Deepseek were also used for drafting and grammar/style editing. To avoid any conflict of interest, no Qwen models were involved.

% \paragraph{Scope.}
% The theorem is a separation for an idealized constant-precision model, not a claim about trained model performance.  The attention gadget below uses finite-precision underflow to obtain exact one-hot selection.  Thus the upper bound should be read as a clean expressivity construction for finite-precision attention, not as a statement that literal production kernels implement this exact arithmetic.

\section{Related work}
\label{sec:related-work}

\subsection{Hybrid attention--recurrent decoders}

The closest contemporary work is Olmo Hybrid~\citep{merrill2026olmo_hybrid}, whose authors ask the same high-level question as we do: whether mixing attention with recurrent linear-time layers yields a more expressive model than simply inheriting the strengths of each component. The main contribution of this paper is in training a 7B Olmo Hybrid model in which most attention layers are replaced by Gated DeltaNet. They also give a formal separation between the hybrid architecture and pure decoders with only one kind of head.

Our theoretical contribution has similar shape to that in Olmo Hybrid paper. Attention provides accurate queried retrieval, while the recurrent component supplies online state tracking. Olmo Hybrid studies this complementarity under logarithmic precision average-hard-attention idealization and with no chain-of-thought. In contrast, our separation is given for constant-precision decoder and applies even if models can use logarithmic scratchpads. Furthermore, our lower bounds are unconditional, while the Gated Attention lower bound in Olmo Hybrid paper crucially uses an unproven complexity-theoretic conjecture.

There is also an architectural difference.  Olmo Hybrid interleaves Gated DeltaNet with multihead attention.  Our motivating implementation is closer to Qwen3-Next, which combines Gated DeltaNet with Gated Attention, with additional QK normalization~\citep{qwen3next_modelcard}. While our lower bound applies to transformer models with standard attention, our upper bound uses a restricted QK-normalized Gated Attention instead. 

\subsection{Transformer expressivity}

Transformer expressivity has a large formal-language and circuit-complexity literature; see Strobl et al.~\citep{strobl2024survey} for a survey.  Existing results are sensitive to modeling choices such as positional encodings, hard versus soft attention, number of layers and heads, causal masking, layer normalization, finite versus infinite precision, encoder--decoder versus decoder-only architecture, and whether the model may generate chain-of-thought tokens.  Early hard-attention lower bounds put unique hard-attention transformers in \(\ACzero\) and therefore rule out parity~\citep{hahn2020theoretical,hao2022formal}.  Average or saturated hard-attention variants are stronger but still admit \(\TCzero\)-type circuit upper bounds~\citep{merrill2022saturated}.  Other works give automata and logical characterizations of hard-attention variants, including star-free languages and first-order definability~\citep{angluin2023masked,barcelo2023logical}.

For softmax attention, the picture depends strongly on precision and positional encodings.  Log-precision transformers admit parallel circuit simulations~\citep{merrill2023parallelism}, while C-RASP and related work characterize counting abilities of softmax attention~\citep{yang2024counting}.  Recent parity constructions show that bounded-depth softmax transformers can compute parity under sufficiently fast growing positional-encoding; in particular, Kozachinskiy et al.~\citep{kozachinskiy2026parity} give a softmax construction with length-independent polynomial positional encodings and no layer normalization, together with a one-layer lower bound.  With generated chain of thought, transformers become much stronger: polynomial-length CoT can simulate broad classes of computation~\citep{li2024cot,merrill2024cot}, and Jiang et al.~\citep{jiang2025softmax_tc} recently proved Turing-completeness for length-generalizable softmax CoT transformers via CoT C-RASP. 

% Our pure gated-attention lower bound is deliberately in the strict-constant-precision, one-step-\(\ACzero\) regime.  It is therefore compatible with positive results for higher precision, stronger positional features, more softmax-realistic models, or polynomial generated CoT.  What it shows is that when a pure gated-attention next-token map remains \(\ACzero\)-realizable, the serial parity component must be paid for in scratch length.

\subsection{Recurrent neural-network expressivity}

Classical work treats recurrent networks as sequential machines.  With idealized continuous states, recurrent neural nets can simulate Turing machines~\citep{siegelmann1995computational}; finite-state and saturated viewpoints connect recurrent networks to finite automata and regular languages~\citep{siegelmann1996finite}.  Recent work on recurrent neural language models refines this viewpoint by studying distributions over strings, connecting RNN language models to probabilistic finite-state automata and proving lower bounds and precision requirements~\citep{svete2024lower}.

Modern linear RNNs and state-space-style models revive these questions under hardware-efficient recurrence.  Gated DeltaNet combines a delta-rule update with gates for memory control~\citep{yang2025gdn}.  Expressivity papers on DeltaNet and related linear RNNs emphasize state tracking.  Grazzi et al.~\citep{grazzi2025negative} show that finite-precision linear RNNs with only positive state-transition eigenvalues cannot solve parity, that non-triangular transitions are needed for modulo-three counting, and that products of identity-minus-outer-product matrices can learn any regular language.  Siems et al.~\citep{siems2025deltaproduct} propose DeltaProduct, generalizing DeltaNet through products of Householder updates, and give a finite-precision characterization of how this mechanism improves state tracking as the number of updates per token increases. 

% Our upper bound uses the recurrent component in the simplest possible way---as a finite-state parity tracker---while our pure-GDN lower bound isolates the complementary failure mode: a fixed rounded recurrent state cannot retain arbitrary delayed-addressed table information.

\section{Preliminaries}
\subsection{Decoders}
Fix a finite vocabulary $\Sigma$, a fixed scratch alphabet $\Gamma\subseteq\Sigma$, and two answer tokens $\YES,\NO\in\Sigma$.  

% A length-indexed hybrid decoder family is a sequence of deterministic decoders $\mathcal D=\{D_M\}_{M\ge1}$, where $M$ is the maximum context length available to the decoder.  The vocabulary $\Sigma$, the precision parameter $s$, the number of layers $L$, the number of heads per layer, and the layer-type sequence
% \[
%   \tau_1,\ldots,\tau_L\in\{\mathsf{GA},\mathsf{GDN}\}
% \]
% are independent of $M$.  The hidden dimension $d_M$ may depend on $M$.
The decoder receives a finite sequence $z_1,\ldots,z_N$ of tokens from $\Sigma$ as input prompt. 
As the first step the decoder applies a fixed embedding map 
\[
  \operatorname{Emb}_M:\Sigma\times[M]\to\mathbb F_s^{d_M}.
\]
On an input $z_1,\ldots,z_m\in\Sigma$, we get
\[
  h_i^{(0)}=\operatorname{Emb}_M(z_i,i),\qquad 1\le i\le m.
\]
The sequence $h_i^{0}$ goes through a sequence of $l$ layers of types $\tau_1,\ldots,\tau_L\in\{\mathsf{GA},\mathsf{GDN}\}$. The $j$-th layers receives the sequence of vectors $h_i^{j-1}$ as input and returns a sequence of vector $h_i^{j}$.
Each of these layers consists of a token-mixing block followed by a feed forward network applied independently to each position. 
Each token-mixing block is applied on a residual stream. If \(T_i^\ell\) is the
token-mixing output at position \(i\), then
\[
  \bar h_i^\ell=[h_i^{\ell-1}+T_i^\ell]_s,
\]
and the following position-wise MLP is also residual:
\[
  h_i^\ell=[\bar h_i^\ell+\operatorname{MLP}^\ell(\bar h_i^\ell)]_s.
\]
There are two types of token-mixing blocks:

\paragraph{Gated Attention.}
 A Gated Attention (or $\mathrm{GA}$) consists of $H$ attention heads.  For each head
$a\in[H]$, let $d_h$ denote the
head dimension. For simplicity we assume that $d_M= H\cdot d_h$.  The $a$-th head of $l$-th layer is specified by affine maps
\[
Q^a,K^a:\mathbb R^d\to\mathbb R^{d_h},
\qquad
V^a:\mathbb R^d\to\mathbb R^{d_h},
\qquad
G^a:\mathbb R^d\to\mathbb R^{d_h}.
\]
At every position $i$, define
\[
q_i^a=\operatorname{RMSNorm}_s(Q^a(h_i^{\ell-1})),
\qquad
k_i^a=\operatorname{RMSNorm}_s(K^a(h_i^{\ell-1})),
\qquad
v_i^a=V^a(h_i^{\ell-1}).
\]
and define the output gate
\[
\gamma_i^a = \sigma(G^a(h_i^{l-1}))\in(0,1)^{d_h},
\]
where $\sigma$ is a positive sigmoid gate applied coordinate-wise. In our upper bound construction, this gate is effectively absorbed into output projections.

For $j\le i$, define the causal attention score
\[
s_{ij}^a
=
\frac{\langle q_i^a,k_j^a\rangle}{\sqrt{d_h}}.
\]
We assume that the positions $j>i$ are causally masked for each querying token $i$.  The attention weights are
\[
A_{ij}^a
=
\frac{\exp(s_{ij}^a)}
{\sum_{r=1}^{i}\exp(s_{ir}^a)}
\qquad\text{for }j\le i,
\]
and $A_{ij}^a=0$ for $j>i$.

The attention output of head $a$ at position $i$ is
\[
u_i^a
=
\sum_{j=1}^{i} A_{ij}^a v_j^a
\in\mathbb R^{d_h}.
\]
The gated head output is
\[
o_i^a
=
\gamma_i^a\odot u_i^a.
\]

Finally, the outputs of all heads are concatenated and projected back to
the model dimension:
\[
z_i
=
W_O
\begin{bmatrix}
o_i^1\\
\vdots\\
o_i^{H}
\end{bmatrix}
+
b_O
\in\mathbb R^d .
\]

\paragraph{Gated DeltaNet.}
A Gated DeltaNet (or $\mathsf{GDN}$) layer consists of 
$H$ heads.
For each head
$a\in[H]$, let $d_h$ denote the
head dimension. For simplicity, we assume that $d_M= H\cdot d_h$.  The $a$-th head of $l$-th layer is specified by affine maps
\[
Q^a,K^a:\mathbb R^d\to\mathbb R^{d_h},
\qquad
V^a:\mathbb R^d\to\mathbb R^{d_h},
\]
\[
A^a,B^a:\mathbb R^d\to\mathbb R,
\qquad
G^a:\mathbb R^d\to\mathbb R^{d_h}.
\]
At position $i$, the head computes:
\[
q_i^a = \frac{Q^a(h_i^{l-1})}{\|Q^a(h_i^{l-1})\|_2},\qquad
k_i^a = \frac{K^a(h_i^{l-1})}{\|K^a(h_i^{l-1})\|_2},\qquad
v_i^a = V^a(h_i^{l-1}),
\]
and
\[
\alpha_i^a=\phi_\alpha^a(A^a(h_i^{l-1})),
\qquad
\beta_i^a=\phi_\beta^a(B^a(h_i^{l-1})),
\]
where \(\phi_\alpha,\phi_\beta:\mathbb R\to[0,1]\) are fixed nonlinear activation functions.
It also computes
\[
\gamma_i^a = \sigma(G^a(h_i^{l-1}))\in(0,1)^{d_h},
\]
where $\sigma$ is applied coordinate-wise.
The head scans the input sequence from left to right, maintaining a recurrent
state \(S_i^a\in F_s^{d_h\times d_h}\). The state is initialized at a fixed
\(S_0^a\) and updated as follows.
\[
S_i^a
=
\alpha_i^a
\Bigl(S_{i-1}^a
-
\beta_i^a (S_{i-1}^a k_i^a)(k_i^a)^\top
\Bigr)
+
\beta_i^a v_i^a(k_i^a)^\top .
\]

The head reads from the updated state using the normalized query:
\[
u_i^a = S_i^a q_i^a \in\mathbb R^{d_h}.
\]
The gated head output is
\[
o_i^a = \gamma_i^a\odot \operatorname{RMSNorm}(u_i^a) .
\]

Finally, the outputs of all recurrent heads are concatenated and
projected back to the model dimension:
\[
z_i
=
W_O
\begin{bmatrix}
o_i^1\\
\vdots\\
o_i^{H}
\end{bmatrix}
+
b_O
\in\mathbb R^d .
\]

% The sequence $(z_1,\ldots,z_M)$ is the output of the Gated DeltaNet
% layer.

% A Gated DeltaNet (or $\mathsf{GDN}$) layer consists of 
% $H_{\rm GDN}$ heads. Let $d_k^a,d_v^a$ be fixed head
% dimensions.  The $a$-th head maintains a causal state
% \[
% S_i^a\in\mathbb F_s^{d_v^a\times d_k^a},
% \qquad S_0^a\in\mathbb F_s^{d_v^a\times d_k^a},
% \]
% where $S_0^a$ is a fixed initial state.  At position
% $i$, a fixed rounded pointwise parameter map applied to
% $h_i^{(\ell-1)}$ produces
% \[
% q_i^a,k_i^a\in\mathbb F_s^{d_k^a},\qquad
% v_i^a\in\mathbb F_s^{d_v^a},\qquad
% \alpha_i^a,\beta_i^a\in\mathbb F_s\cap[0,1].
% \]
% The state update is
% \[
% S_i^a
% =
% \left[
% S_{i-1}^a
% \bigl(\alpha_i^a(I-\beta_i^a k_i^a(k_i^a)^\top)\bigr)
% +
% \beta_i^a v_i^a(k_i^a)^\top
% \right]_s .
% \]
% The head output is
% \[
% o_i^a=[S_i^a q_i^a]_s\in\mathbb F_s^{d_v^a}.
% \]
% The layer output is the output projection of the
% concatenated head outputs,
% \[
% h_i^{l}
% =
% \left[
% W_O^\ell(o_i^1,\ldots,o_i^{H_{\rm GDN}})
% +
% b_O^\ell
% \right]_s.
% \]
As in a $\mathsf{GDN}$ block, this token-mixing output is followed
by the layer's rounded position-wise MLP with ReLU activation.

\paragraph{Output}
After the final layer, a rounded affine map applied to the last position produces logits in $\mathbb F_s^{|\Sigma|}$.  The next token is chosen by deterministic greedy argmax with a fixed tie-breaking order on $\Sigma$.  Since the logits are rounded and $\Sigma$ is finite, the argmax is a finite lookup on rounded values.

\paragraph{Problem solving and scratchpads}
We say that a decoder $D$ solves a promise problem with scratch budget $S(N)$ if, on every promised prompt of length $N$, greedy decoding emits at most $S(N)$ tokens from the fixed scratch alphabet $\Gamma$ before its first token in $\{\YES,\NO\}$, and that first answer token is correct.

\paragraph{Hybrid and pure decoders}
A pure gated-attention decoder is the special case in which every layer has type $\mathsf{GA}$.  A pure Gated DeltaNet decoder is the special case in which every layer has type $\mathsf{GDN}$. A hybrid decoder contains both kinds of layers stacked in arbitrary order.

% \begin{remark}[Qwen-style layer layouts]
% \label{rem:qwen-layout}
% The definition permits any fixed layer-type sequence.  This includes alternating $\mathsf{GDN}$ and $\mathsf{GA}$ layers, and also Qwen3-Next-style patterns with several Gated DeltaNet blocks for each Gated Attention block.  The upper-bound construction only uses a constant number of effective layers; unused layers can be set to identities by rounded parameters.
% \end{remark}

\subsection{Constant precision}
Our results assume the following practically-motivated constant-precision model. All tensors that
cross module boundaries---hidden states, gates, keys, queries, values, attention
probabilities, recurrent states, residual streams, MLP activations, and logits---
are stored in a fixed finite format \(\mathbb F_s\).  Ordinary scalar operations
are rounded back to this format.  The only exception is that certain reductions
internal to an operator, namely RMS reductions, QK dot products, and softmax
denominators, may use a wider temporary accumulator. Such an accumulator is
not persistent model state and is not exposed to later layers; only its rounded
output is used by subsequent computations.  We note that our convention slightly differs from that of some papers (e.g. \cite{li2024cot}) where all arithmetic is rounded operation by operation. We specify the convention in detail in the Appendix A.

% Fix an integer $s\ge 2$.  Let
% \[
%   \mathbb F_s=\{0\}\cup\{\pm k2^{-s}:1\le k\le 2^{2s}-1\}.
% \]
% Let $B_s=2^s-2^{-s}$ be the largest positive element of $\mathbb F_s$.  The rounding map $[\cdot]_s:\mathbb R\to\mathbb F_s$ sends a real number to the closest element of $\mathbb F_s$, breaking ties by smaller absolute value and saturating to $\pm B_s$ on overflow.  Vectors and matrices are rounded coordinatewise.
% \end{definition}

% For $a,b\in\mathbb F_s$ define rounded binary arithmetic by
% \[
%   a\oplus_s b=[a+b]_s,\qquad
%   a\ominus_s b=[a-b]_s,\qquad
%   a\otimes_s b=[ab]_s.
% \]
% For $x=(x_1,\ldots,x_m)\in\mathbb F_s^m$, let
% \[
%   \operatorname{sum}_s(x_1,\ldots,x_m)
%   =(((x_1\oplus_s x_2)\oplus_s x_3)\cdots)\oplus_s x_m,
% \]
% with the evident convention for $m=1$.  The finite-precision inner product is
% \[
%   \langle x,y\rangle_s=\operatorname{sum}_s(x_1\otimes_s y_1,\ldots,x_m\otimes_s y_m).
% \]
% Constant-precision softmax is computed by rounding the exponentials, summing them with rounded summation, and rounding the divisions:
% \[
%   \operatorname{softmax}_s(z)_i
%   =\left[\frac{[\exp(z_i)]_s}{\operatorname{sum}_s([\exp(z_1)]_s,\ldots,[\exp(z_m)]_s)}\right]_s,
% \]
% whenever the denominator is nonzero. 

\subsection{Circuit complexity}
\begin{definition}
Let $\mathcal{L}_n$ be a finite language of binary words of length $n$. We will say that a circuit $C$ with one output node recognizes $\mathcal{L}_n$ if for each $w$ with $|w|=n$ we have $C(w)=1$ if and only if $w\in \mathcal{L}_n$. Moreover, a language $\mathcal{L}$ is recognized by a family of circuits $C_1,C_2,\ldots$ if $C_n$ recognizes the subset of $\mathcal{L}$ consisting of strings of length $n$.
\end{definition}
\begin{definition}
A language \(\mathcal L\) is in \(\ACzero\) if it is recognized by a family of circuits of polynomial size and constant depth
with unbounded-fan-in AND and OR gates and fan-in-one NOT gates.
\end{definition}
Boolean circuits with unbounded-fan-in AND and OR gates and fan-in-one NOT gates will be simply called circuits for the rest of the paper.
For our lower bound on pure Gated Attention decoders we will use the following standard result from circuit complexity.
\begin{fact}[H\aa stad parity lower bound]
\label{fact:hastad}
For every fixed depth $d\ge2$, any depth-$d$ circuits computing parity on $n$ bits has size at least
\[
  2^{\Omega_d(n^{1/(d-1)})}.
\]
The same lower bound holds for the complement of parity~\cite{furst1984,hastad1986}. In particular, parity is not in $AC^0$.
\end{fact}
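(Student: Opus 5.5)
The plan follows the random-restriction method of Furst--Saxe--Sipser, sharpened by H\aa stad's switching lemma. Write the circuit as alternating layers of AND and OR gates, with negations pushed to the inputs by De Morgan's laws. This at most doubles the size and does not change the depth. Add a dummy bottom layer of fan-in one, so the circuit is depth $d$ with bottom fan-in $1$. A random restriction $\rho$ leaves each variable free with probability $p$ and otherwise fixes it to $0$ or $1$ uniformly. Parity restricted by any $\rho$ is parity or its complement on the free variables. This is why the statement for the complement comes along for free, and why an induction can track parity itself.

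The central tool is H\aa stad's switching lemma. If $F$ is a $t$-CNF (or $t$-DNF) and $\rho$ is a $p$-random restriction, then
\[
\Pr\bigl[F|_\rho \text{ has no decision tree of depth } \le r\bigr]\le (5pt)^r .
\]
In particular, with that probability bound $F|_\rho$ can be rewritten as an $r$-DNF (respectively $r$-CNF). I would prove it by the Razborov-style encoding argument. For a bad $\rho$, follow the canonical decision tree of $F|_\rho$ along a path of length $r$. Record the clauses it passes through, the positions of the path variables inside them, and their values. From $\rho$ together with this $O(r\log t)$-bit advice, construct a restriction $\rho'$ that fixes $r$ more variables. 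This map is injective, and extra fixed variables cost a factor of roughly $p$ each in probability. Summing gives the $(O(pt))^r$ bound. I expect this injectivity and counting step to be the main technical obstacle; the rest is bookkeeping.

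The iteration then goes as follows. Let the circuit have size $S$, and choose $t=r=\log S$ (up to constants) and $p=1/(10t)$. A first restriction turns every bottom gate into a small-width term with failure probability $\le 1/S$ each. Each subsequent restriction switches the lowest two alternating layers, turning them from $t$-CNF into $t$-DNF or vice versa. The new bottom layer can then merge with the layer above, so depth drops by one. A union bound over at most $S$ gates shows that each stage succeeds with probability at least $1/2$. After $d-2$ rounds of switching we reach a depth-$2$ circuit with bottom width $t$. A Chernoff bound guarantees with constant probability that about $n p^{d-1}=n/(10\log S)^{d-1}$ variables remain free.

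A width-$t$ CNF or DNF computes parity (or its complement) on $m$ live variables only if $t\ge m$, since any satisfying term must mention every live variable. Hence $\log S\gtrsim n/(\log S)^{d-1}$, so $(\log S)^{d}\gtrsim n$. Reserving one level more carefully, by starting with $t=r$ and only $d-1$ switchings, gives $\log S=\Omega_d(n^{1/(d-1)})$. For the ``in particular'' clause: a polynomial-size constant-depth family would have $\log S=O(\log n)$, contradicting this bound, so parity $\notin\ACzero$.
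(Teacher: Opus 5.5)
The paper does not prove this Fact; it only cites Furst--Saxe--Sipser and H\aa stad. Your outline follows the standard H\aa stad switching-lemma argument, and its structure is right. The gap is in the final parameter count.

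\textbf{The count as written gives the wrong exponent.} After adding the dummy fan-in-one layer, your circuit has $d+1$ levels, not $d$. You then apply $d-1$ restrictions, all with $p=1/(10t)$. That leaves about $n/(10t)^{d-1}$ live variables, so the width argument gives only $t^{d}\gtrsim n$, i.e.\ size $2^{\Omega(n^{1/d})}$, as you compute yourself. Your repair sentence (``starting with $t=r$ and only $d-1$ switchings'') changes nothing. You were already using $d-1$ restrictions, and the sentence does not remove the extra factor of $t$. So the proposal does not reach the stated exponent $1/(d-1)$. The ``in particular'' clause is unaffected, since $2^{\Omega(n^{1/d})}$ already rules out polynomial size.

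\textbf{The missing observation.} The first restriction only has to switch gates of bottom fan-in $1$. For those, the switching-lemma failure bound is $(5p\cdot 1)^r$, which does not depend on $t$.
\begin{enumerate}
\item Use a constant $p_0=1/10$ in that first round. Each bottom gate then keeps at most $t$ live inputs except with probability $2^{-t}$. With $t=\lceil\log_2(8S)\rceil$, the union bound over at most $S$ gates succeeds.
\item Only the remaining $d-2$ rounds need $p=1/(10t)$. In each round, some restriction both switches every gate and keeps a constant fraction of the expected number of stars, by Chernoff. If the expected number of stars ever falls below a constant, the bound already follows.
\item You end with a $t$-CNF or $t$-DNF computing parity or its complement on $m=\Omega_d(n/t^{d-2})$ live variables.
\item Then $t\ge m$ gives $t^{d-1}=\Omega_d(n)$, hence $\log S=\Omega_d(n^{1/(d-1)})$.
\end{enumerate}
For $d=2$ this is a single constant-$p$ round, consistent with the trivial $2^{n-1}$ lower bound for DNFs.

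\textbf{The same care is needed inside the switching lemma.} The clauses met along the path must be recoverable from $\rho'$, not recorded in the advice. The advice must be $r\log t+O(r)$ bits, i.e.\ $(ct)^r$ strings. A generic $O(r\log t)$ bits only yields $(p\,t^{O(1)})^r$, which would degrade the exponent again.
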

For languages over finite non-binary alphabets, we use any fixed binary encoding of the alphabet; this changes circuit size only by constant factors.
% The following upper bound for constant-precision standard transformers is known.
% \begin{fact}[Li-Liu-Zhou-Ma \cite{li2024cot}]
% \label{fact:llzm-log-cot}
% Constant-depth decoder-only transformers with constant-size
% precision, polynomial embedding dimension, and \(O(\log n)\) scratchpad recognize only languages in \(\ACzero\). 
% \end{fact}

% \begin{definition}[$\ACzero$-realizable pure gated-attention decoder]
% A deterministic pure gated-attention decoder with finite scratch alphabet $\Gamma$ is $(d_0,c_0)$-$\ACzero$-realizable if, for every context length $M$ and every possible next token $a\in\Gamma\cup\{\YES,\NO\}$, the predicate
% \[
%   F_M(z_1,\ldots,z_M)=a
% \]
% including the rounded forward pass, rounded logits, deterministic tie-broken argmax, and any length-indexed positional embeddings hardwired at length $M$, is computed by a nonuniform $\ACzero$ circuit of depth at most $d_0$ and size at most $M^{c_0}$.  The input variables of the circuit are only the token identities in the context.
% \end{definition}

% \begin{fact}[Constant-precision transformers are in $\ACzero$]
% \label{fact:llzm}
% Fixed-depth decoder-only transformers with a fixed finite vocabulary, polynomial embedding dimension, length-indexed positional embeddings, constant precision, and no generated chain of thought are $\ACzero$-realizable in the sense above.  This is the upper-bound regime of Li--Liu--Zhou--Ma~\cite{li2024cot}.
% \end{fact}

\section{Main result}
\subsection{Task}
\label{sec:task}

\begin{definition}[Parity-Conditioned Retrieval]
\label{def:pmar}
Fix an integer $n\ge1$.  An instance consists of a bit string
\[
  Y=(Y_1,\ldots,Y_n)\in\zo^n
\]
and a query address $j\in[n]$.  Let
\[
  p(Y)=Y_1\oplus Y_2\oplus\cdots\oplus Y_n
\]
be the parity of the string.  The required output $\PCR_n(Y,j)$ is $\YES$ iff
\[
  Y_j\oplus p(Y)=1.
\]
and $\NO$ otherwise.
\end{definition}

For the formal separation we will encode the task in the following way.
\[
  \Sigma_{\rm in}=\{\BITZERO,\BITONE,\MARK,\BLANK\}.
\]
For a bit $a\in\zo$, write
\[
  B(a)=
  \begin{cases}
    (\BITZERO,\BITZERO), & a=0,\\
    (\BITONE,\BITONE), & a=1.
  \end{cases}
\]
For $j\in[n]$, let $Q(j)\in\{\MARK,\BLANK\}^n$ be the string with a single $\MARK$ at position $j$ and $\BLANK$ elsewhere.  The prompt for $(Y,j)$ is
\[
  \pi_n(Y,j)=B(Y_1)\cdots B(Y_n)\,Q(j),
  \qquad |\pi_n(Y,j)|=3n.
\]
\paragraph{Remark.} In the precise task formulation we are using each token of the $Y$ is encoded with two bits. This is a technical trick introduced so that the task can be solved by Qwen-style positive Gated DeltaNet. Alternatively, we could make our DeltaNets less restricted and allow for negative eigenvalues. This would be enough for the separation to hold with one-bit-per-token presentation.

\begin{theorem}[Fixed-precision scratchpad hierarchy]
\label{thm:main}
The Parity-Conditioned Retrieval defined above satisfies the following:
\begin{enumerate}[leftmargin=2em]
  \item For each n and N=3n, there exists a constant-precision hybrid decoder with one $\mathsf{GDN}$ layer and two $\mathsf{GA}$ layers that solves the \(\PCR_n\) in a single iteration (no scratchpad).  The model dimension $d_M$ is of order $O(\log N)$.
  \item Let $D_1,D_2\ldots$ be a family of constant-precision pure Gated Attention decoders with fixed depth and model dimension \(d_M=O(\log M)\) and  suppose that each $D_i$ solves the task on inputs of size $i$ with a scratch budget $S(N)$. then we have
  \[
    S(N)=N^{\Omega(1)}.
  \]
  \item Let $G$ be a constant-precision pure Gated DeltaNet decoder whose rounded recurrent state has $D(N)$ scalar coordinates over a fixed finite alphabet $Q$ on length-$N$ prompts.  If $G$ solves $\PCR_n$ on prompts $\pi_n(Y,j)$ of table length $n$, then
  \[
    D(N)\log_2|Q|\ge n-1.
  \]
  Consequently, every pure constant-precision Gated DeltaNet family of decoders with $D(N)\log_2|Q|=o(N)$ fails for the length family $N=3n$, even with unbounded scratchpad.
\end{enumerate}
\end{theorem}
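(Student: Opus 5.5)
I will prove the three parts separately; each uses a different technique.

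\emph{Part 1 (upper bound).} I will build the hybrid layer by layer.

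First, the GDN layer tracks parity using a two-coordinate head with positive gate $\alpha_i=1$ and $\beta_i=1$. The key is $k_i=e_1$ on $\BITZERO$ tokens and $k_i=e_2$ on $\BITONE$ tokens, with value $v_i=e_1$ (say). I then reduce the delta-rule update to a ``swap'' on the two-state memory. The doubled encoding $B(a)$ is what makes this work: a $\BITONE$ pair applies the update twice, so the state returns to a configuration encoding the flipped parity. To get this, I will choose keys and values so that each pair of identical tokens implements a Householder-like reflection realized by two positive projections. An alternative is to use position parity (odd vs.\ even position within the pair) in the embedding, so that the two tokens of a pair write different keys. After the $2n$ bit tokens, the readout $S_iq_i$ has one of two rounded values, and the MLP turns these into a clean bit $p$ in the residual stream. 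I will check that all entries stay in $\{0,\pm1\}$, so that rounding is exact.

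Second, the two attention layers do the retrieval, using an $O(\log N)$-dimensional binary positional code $\mathrm{bin}(i)\in\{\pm1\}^{\lceil\log N\rceil}$, scaled after RMSNorm to large magnitude.
\begin{itemize}
\item \textbf{GA layer 1:} the last position queries for the $\MARK$ token. Its key carries a large $\MARK$-indicator coordinate, so finite-precision underflow makes the softmax exactly one-hot. The value copies $\mathrm{bin}(j')$, where $j'$ is the position of the mark. The MLP converts this to the code of $2(j'-2n)$, which is affine in bits when the code is chosen appropriately. Simpler still, I can embed at each query position $2n+t$ directly the code of $2t$.
\item \textbf{GA layer 2:} the last position uses that code as query against the key $\mathrm{bin}$ of each position. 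The score is maximized exactly at the target, and the gap is at least a constant times the scale, so the result is one-hot after rounding. This retrieves $Y_j$.
\end{itemize}
Finally, the MLP computes the XOR of $Y_j$ and $p$, and the output map selects $\YES$ or $\NO$.

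The main obstacle is verifying exact one-hot attention under the fixed rounding convention of Appendix A. The QK scale must make $\exp$ of the nonmaximal scores round to $0$ while the maximal one does not overflow the denominator. I will handle this by letting the wider accumulator compute the dot product and then subtracting the maximum.

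\emph{Part 2.} I will show that for fixed depth, constant precision and $d_M=O(\log M)$, the map from context to next token is computable by constant-depth circuits of size polynomial in $M$. The route is a Li--Liu--Zhou--Ma-type simulation: every rounded value lies in a finite set, and softmax with a wide accumulator is a bounded counting threshold. I will argue that with constant precision the rounded sums saturate, so they are expressible in $\ACzero$. With scratch budget $S$, unrolling greedy decoding composes $S$ such circuits, giving depth $O(S)$ and polynomial size.

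The reduction is to fix $j$ and, by choosing $Y_j$ appropriately, hardwire so that the output equals $p(Y)$ or its complement on the remaining $n-1$ bits. Fact~\ref{fact:hastad} then forces $2^{\Omega((n-1)^{1/(cS)})}\le \mathrm{poly}(N)$, i.e.\ $n^{1/(cS)}=O(\log N)$. As written this yields only $S=\Omega(\log N/\log\log N)$. To reach $N^{\Omega(1)}$ I will instead follow the paper's apparent intent: bounded-size scratch tokens each depend on an $\ACzero$ function. I will group the $S$ steps so that the circuit depth is $O(1)$ and the size is $2^{O(S)}\mathrm{poly}(N)$, hardwiring over the $|\Gamma|^S$ possible scratch strings and guessing consistency with a big OR. Then Fact~\ref{fact:hastad} at constant depth gives $2^{O(S\log|\Gamma|)}\mathrm{poly}(N)\ge 2^{n^{\Omega(1)}}$, so $S=N^{\Omega(1)}$. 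This guessing trick is the key step.

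\emph{Part 3.} This is a communication and fooling argument. After reading the $2n$ bit tokens, the whole future computation (including unbounded scratch) depends only on the rounded recurrent states of all heads, which take at most $|Q|^{D}$ values. Fix $Y$ and vary the query $j$: the answers $Y_j\oplus p(Y)$ determine $Y$ up to global complement, so there are $2^{n-1}$ distinguishable classes. Two strings $Y\neq Y'$ outside the same class that reach the same state would receive identical answers on all queries, a contradiction. Hence $|Q|^{D}\ge 2^{n-1}$, which is exactly $D(N)\log_2|Q|\ge n-1$.
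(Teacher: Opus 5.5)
Parts 2 and 3 follow the paper's route: you hardwire all scratch transcripts of length at most $S$ so the depth stays constant and then apply H\aa stad, and you count the $2^{n-1}$ response vectors that the rounded state must separate. The genuine gap is in Part 1, and the main one is the parity cell. The head you write down cannot track parity. With $\alpha=\beta=1$ and a unit key, $S\mapsto S(I-kk^\top)+vk^\top$ just overwrites the $k$-column of $S$ by $v$, so it is idempotent. A $\BITONE$ pair therefore acts like a single $\BITONE$, and once both symbols have been read the state is the constant matrix with both columns $e_1$.

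The repairs you sketch cannot work either, as long as entries stay in $\{0,\pm1\}$ ``so that rounding is exact''. In exact arithmetic a positive-gate update is $S\mapsto S\,\alpha(I-\beta kk^\top)+\beta vk^\top$. So the two updates of a pair, even with different parameters on the two phases, compose to an affine map $x\mapsto xM+c$ on each state row, where $M$ is a product of two PSD matrices and hence has real nonnegative spectrum. Such a map cannot swap two distinct states: $T(P_0)=P_1$ and $T(P_1)=P_0$ give $(P_0-P_1)M=-(P_0-P_1)$. More generally, every periodic orbit of such a map is a fixed point, so on a finite state set a run of $\BITONE$ pairs eventually freezes the state. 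A ``Householder-like reflection realized by two positive projections'' is exactly what this rules out; it is the positive-eigenvalue barrier of Grazzi et al.

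The missing idea is that rounding must supply the nonlinearity. The paper keeps the parity states at the smallest scale, $P_0=(0,\delta)$ and $P_1=(\delta,0)$ with $\delta=2^{-s}$. It applies different updates $G,F$ to the two tokens of a $\BITONE$ pair, using the phase $r\in\{1,2\}$ in the embedding. It then exploits $[\delta/2]_s=0$, $[3\delta/4]_s=\delta$ and $[3\kappa\delta]_s=2\delta$ for $\kappa=[1/\sqrt2]_s$, so that $(G,F)$ swaps $P_0$ and $P_1$ while $(Z,Z)$ fixes both. The readout then only has to separate zero from nonzero, which is what survives the RMSNorm on the GDN output.

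The second gap is exact selection in your second attention layer. In this model queries and keys are RMS-normalized and the score is $\langle q,k\rangle/\sqrt{d_h}$ with no temperature, so nothing can be ``scaled after RMSNorm''. The rounded softmax also computes $[\exp(z_r)]_s$ directly, without subtracting a maximum. For $\pm1$ binary position codes of length $m$, two addresses differing in one bit score $\sqrt m$ and $\sqrt m-2/\sqrt m$. The gap tends to $0$, and for large $m$ both rounded exponentials saturate at $B_s$, so the attention is not one-hot. Your first layer is fine: a $\MARK$-versus-rest indicator of constant dimension already makes every non-$\MARK$ exponential underflow.

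What layer 2 needs is an address code with large minimum distance. The paper takes a code with distance at least $m/3$ and $m=O(\log n)$, encoded as $Q(c)=(c_1,1,\dots,c_m,1)$ and $K(d)=(d_1,-1,\dots,d_m,-1)$. The score is then $-2\Delta(c,d)/\sqrt{2m}$: exactly $0$ at the match and at most $-\sqrt{2m}/3$ elsewhere. The rounded exponentials are therefore $1$ and $0$, and selection is exact within $O(\log N)$ dimensions.
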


The three parts are proved in Sections~\ref{sec:upper}, \ref{sec:ga-lower}, and~\ref{sec:gdn-lower}, respectively.

\section{Hybrid upper bound}
\label{sec:upper}

This section proves Item 1 of Theorem~\ref{thm:main}.  The proof has three components: a hard-selection attention gadget, a direct positive-gate Gated DeltaNet parity cell, and the assembly of these gadgets into a hybrid decoder for $\PCR$.

\subsection{Hard selection}

The attention gadget uses separated address codes rather than ordinary binary
strings.  For the fixed precision \(s\), choose a constant \(m_0(s)\) such that, for all
\(m\ge m_0(s)\),
\[
  \left[
    \exp\left(
      \left[-\frac{\sqrt{2m}}{3}\right]_s
    \right)
  \right]_s
  =
  0.
\]
For each \(n\), fix an error-correcting code
\[
  E_n:[n]\cup\{\MARK,\BLANK,\DUMMY\}\to\{-1,+1\}^{m_n},
\]
where \(m_n=O_s(\log n)\), \(m_n\ge m_0(s)\), and distinct codewords have
Hamming distance at least \(m_n/3\). Such codes exist by the standard
random-coding argument.

\begin{lemma}
\label{lem:equality}
\label{lem:equality}
Let \(\mathcal C\subseteq\{-1,+1\}^m\) have minimum Hamming distance at least
\(m/3\), with \(m\ge m_0(s)\). There is a single causal \(\mathsf{GA}\) head of
dimension \(2m\) which exactly selects the unique unmasked position whose code
equals the query code.
\end{lemma}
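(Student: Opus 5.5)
The plan is to build the head so that the query and key vectors are, after RMSNorm, scaled copies of the codewords. The matching position then gets a score that exceeds every other score by a margin large enough that the rounded exponentials of the non-matching positions underflow to zero. This is what the choice of \(m_0(s)\) was made for.

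\textbf{Encoding.} The head dimension is \(d_h=2m\). I would let the query and key projections read the code \(E(\cdot)\in\{-1,+1\}^m\) stored in the residual stream (placed there by the embedding or an earlier layer) and output the vectors \((c,c)\in\{-1,+1\}^{2m}\), i.e.\ the codeword duplicated. Such a vector has RMS exactly \(1\), so \(\operatorname{RMSNorm}_s\) leaves it unchanged and no rounding occurs, since \(\pm1\in\mathbb F_s\). For query code \(c\) and key code \(c'\) the score is
\[
s=\frac{2\langle c,c'\rangle}{\sqrt{2m}} .
\]
Since \(\langle c,c'\rangle=m-2\,d_H(c,c')\), this gives \(s=\sqrt{2m}\) when \(c=c'\). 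When \(c\neq c'\), the distance bound \(d_H\ge m/3\) gives \(s\le \tfrac{2(m/3)}{\sqrt{2m}}=\tfrac{\sqrt{2m}}{3}\). The dot product is taken in the wide accumulator, so these values are exact before rounding.

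\textbf{Underflow.} Next I would shift the scores so that the selected score is exactly \(0\) and the others are at most \(-2\sqrt{2m}/3\le-\sqrt{2m}/3\). The softmax is invariant under a common shift, so I would argue at the level of the softmax definition that each weight depends only on differences of scores. If the rounded softmax convention (Appendix A) instead normalizes by the maximum, the same effect follows directly. The matching position then has rounded weight \([\exp 0]_s=1\). By the defining property of \(m_0(s)\), and monotonicity of \(\exp\) and of rounding, every other position has \([\exp(\le[-\sqrt{2m}/3]_s)]_s=0\).

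The denominator is therefore exactly \(1\). The attention weights are \(A_{ij}=1\) at the unique matching unmasked \(j\) and \(0\) elsewhere, so \(u_i=v_j\) exactly. The gate \(\gamma_i\) is positive and can be made constant, for example by setting \(G\) to a fixed bias, so that \(\sigma\) is a known value that the output projection \(W_O\) absorbs. This is the ``exact selection'' claimed in Lemma~\ref{lem:equality}.

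\textbf{Main obstacle.} I expect the delicate step to be reconciling the shift argument with the exact finite-precision softmax convention. If the accumulator computes \(\exp(s)\) without max-subtraction, the matching term is \(e^{\sqrt{2m}}\), which saturates, and the ratios become unclear. My first attempt would be to read the Appendix A convention as computing the normalized weights \(\exp(s_j)/\sum_r\exp(s_r)\) in the wide accumulator and rounding only the result. In that case each non-matching weight is at most \(\exp(-2\sqrt{2m}/3)\), which is smaller than \(\exp(-\sqrt{2m}/3)\) and so rounds to \(0\) by the choice of \(m_0(s)\). The matching weight is at least \(1-n\exp(-2\sqrt{2m}/3)\).

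That quantity rounds to \(1\) because \(m=m_n\) grows like \(C\log n\) with a large constant \(C=C(s)\). This forces \(n e^{-2\sqrt{2m}/3}\) below half the spacing of \(\mathbb F_s\) near \(1\). If that margin fails, I would enlarge \(m_n\) by a constant factor, which keeps \(m_n=O_s(\log n)\).
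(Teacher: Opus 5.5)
There is a genuine gap at the step where you shift the scores. The paper's finite-precision softmax (Appendix A) is neither shift-invariant nor max-normalized. Each score is first rounded into $\mathbb F_s$, saturating at $\pm B_s$. Each exponential is then rounded individually, $e_r=[\exp(z_r)]_s$. Only afterwards are the $e_r$ accumulated and the quotients rounded.

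With your duplicated codewords $(c,c)$, the matching score is $\sqrt{2m}$, the largest value RMS normalization permits, and non-matching scores range over $[-\sqrt{2m},\sqrt{2m}/3]$. This breaks selection in two ways:
\begin{itemize}
\item Once $\sqrt{2m}/3>B_s$, a non-matching key at Hamming distance exactly $m/3$ (allowed by the hypothesis) rounds to the same saturated score $B_s$ as the match. The two then receive equal weight.
\item Any non-matching key with inner product near $0$ (e.g.\ a nearly orthogonal codeword) has score near $0$, hence $e_r\approx 1$ rather than $0$. With many such positions the accumulated denominator is large and every rounded probability becomes $0$.
\end{itemize}

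Your fallback reading (exact softmax, rounding only the final weights) is not the paper's convention, and it also fails quantitatively. The margin $2\sqrt{2m}/3$ grows only like $\sqrt{\log n}$ when $m=O(\log n)$, so $n e^{-2\sqrt{2m}/3}\to\infty$. Enlarging $m_n$ by a constant factor cannot repair this; you would need $m=\Theta(\log^2 n)$. The lemma, however, only assumes $m\ge m_0(s)$, a constant independent of the number of positions.

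The missing idea is to build the shift into the query--key inner product itself, which is what the extra $m$ dimensions are for. Take $Q(c)=(c_1,1,c_2,1,\ldots,c_m,1)$ and $K(c)=(c_1,-1,c_2,-1,\ldots,c_m,-1)$. All entries are $\pm1$, so RMS normalization is a no-op, and $\langle Q(c),K(d)\rangle=\sum_r(c_rd_r-1)=-2\Delta(c,d)$. Then:
\begin{itemize}
\item The matching score is exactly $0$, so its rounded exponential is exactly $1$.
\item Every non-matching score is at most $-\sqrt{2m}/3$, so its rounded exponential is $0$ by the choice of $m_0(s)$ and monotonicity of rounding and $\exp$. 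This is the step you wanted, now actually available in the model.
\item The accumulated denominator is exactly $1$, the rounded weights are one-hot, and value mixing returns $v_j$ exactly, regardless of sequence length.
\end{itemize}
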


\begin{proof}
For \(c\in\mathcal C\), use
\[
  Q(c)=(c_1,1,c_2,1,\ldots,c_m,1),
  \qquad
  K(c)=(c_1,-1,c_2,-1,\ldots,c_m,-1).
\]
All coordinates are \(\pm1\), so RMS normalization leaves these vectors
unchanged. For \(c,d\in\mathcal C\),
\[
  \langle Q(c),K(d)\rangle_{\rm acc}
  =
  \sum_{r=1}^m(c_rd_r-1)
  =
  -2\Delta(c,d),
\]
where \(\Delta\) is Hamming distance. Hence the rounded QK-normalized score is
zero when \(c=d\). If \(c\ne d\), then
\[
  \frac{\langle Q(c),K(d)\rangle_{\rm acc}}{\sqrt{2m}}
  \le
  -\frac{\sqrt{2m}}{3},
\]
so by the choice of \(m_0(s)\),
\[
  \left[\exp(z(c,d))\right]_s=0.
\]
Meanwhile \(z(c,c)=0\), so
\[
  \left[\exp(z(c,c))\right]_s=1.
\]
Thus, if exactly one unmasked code equals the query code, the rounded softmax
has weight \(1\) on that position and weight \(0\) on all others. Since value
mixing uses rounded probabilities, the head returns exactly the value at the
matching position.
\end{proof}

\subsection{Gated DeltaNet parity cell}

\begin{lemma}[GDN parity solver]
\label{lem:gdn-toggle}
For every fixed precision \(s\ge2\), there is a two-coordinate
\(\mathsf{GDN}\) head that solves parity on the doubled bit encoding
\(B(0),B(1)\). 
\end{lemma}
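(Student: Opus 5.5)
The plan is to reduce the lemma to a finite check. The rounded recurrent state \(S_i\in\mathbb F_s^{2\times 2}\) ranges over a finite set, and each token \(\BITZERO,\BITONE\) induces a fixed map \(T_{\BITZERO},T_{\BITONE}\) on this set:
\[
S\mapsto\bigl[\alpha\bigl(S-\beta(Sk)k^\top\bigr)+\beta vk^\top\bigr]_s,
\]
with \(k,v,\alpha,\beta\) depending only on the token. Because \(h_i^{(0)}=\operatorname{Emb}(z_i,i)\) can also carry the parity of the position \(i\), the map may depend on whether the token is the first or second half of a pair \(B(a)\). So the head really implements four maps \(T_{\BITZERO}^{(1)},T_{\BITZERO}^{(2)},T_{\BITONE}^{(1)},T_{\BITONE}^{(2)}\). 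It suffices to exhibit two states \(P_0\ne P_1\) and a fixed query \(q\) such that three conditions hold:
\[
T_{\BITZERO}^{(2)}T_{\BITZERO}^{(1)}P_b=P_b,\qquad
T_{\BITONE}^{(2)}T_{\BITONE}^{(1)}P_b=P_{1-b},\qquad
\operatorname{RMSNorm}(P_0q)\ne\operatorname{RMSNorm}(P_1q).
\]
We then set \(S_0=P_0\). An induction over the \(n\) pairs gives \(S_{2t}=P_{Y_1\oplus\cdots\oplus Y_t}\), and the output gate and \(W_O\) expose the two distinguishable readouts. For \(\BITZERO\) the identity is immediate: take \(\beta=0\) and \(\alpha=1\).

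The real work is the swap, and here I expect the main obstacle. Before rounding, each row \(r\) of \(S\) evolves by the affine map
\[
r\mapsto\alpha\bigl(r-\beta(r\cdot k)k\bigr)+\beta v_r k .
\]
With \(\alpha,\beta\in[0,1]\) and \(\|k\|=1\) this map is nonexpansive. Hence \(P_0-P_1\) cannot be mapped exactly to \(P_1-P_0\) by any composition. Indeed, norm preservation would force \(P_0-P_1\perp k^{(1)},k^{(2)}\), and then the composition acts as the identity on that difference, not as \(-1\). So exact real arithmetic with positive gates cannot work; this is precisely the positive-eigenvalue obstruction. The construction must therefore use the rounding \([\cdot]_s\) that the precision convention applies to \(S_i\), since rounding is not nonexpansive.

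My first attempt is to take \(k^{(1)}=(e_1+e_2)/\sqrt2\) and \(\beta=1\) on the first \(\BITONE\). This sends a row \((x,y)\) to \(\bigl(\tfrac{x-y}{2}+c,\tfrac{y-x}{2}+c\bigr)\) with \(c=v/\sqrt2\), which exchanges the roles of the two coordinates up to a shrink of the difference. On the second \(\BITONE\) I would use a step whose values, after \([\cdot]_s\), land exactly on the grid points of \(P_{1-b}\). Concretely, I would try to choose \(\alpha<1\) and a value \(v\) so that the pre-rounding entries lie within half an ulp of the target entries of \(P_{1-b}\). Rounding then snaps the state back and undoes the contraction.

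If one row does not suffice, I would use both rows of \(S\) (the two coordinates) with different offsets \(v_r\), so that the pair of rows is distinguishable in each parity. The proof then finishes with an explicit table of the four states before and after each half-step, checked for every \(s\ge2\). Only finitely many values \(\{0,\pm\tfrac12,\pm1\}\) appear, and these are representable in \(\mathbb F_s\) for all \(s\ge2\). I expect verifying this table, and in particular getting the rounding to snap correctly in the swap step, to be the delicate part. The rest — the induction over pairs and distinguishing \(P_0q\) from \(P_1q\) after RMSNorm — is routine.
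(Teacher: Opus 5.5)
\textbf{There is a genuine gap: the swap step, which is the whole content of the lemma, is missing, and the route you sketch for it cannot work.} Your skeleton matches the paper's: token/phase-dependent maps, $S_0=P_0$, induction over pairs, and a fixed readout query. You are also right that the swap has to be manufactured by rounding.

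Two things go wrong.

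\textbf{(i) The proposed first half-step exchanges nothing.} With $k=(e_1+e_2)/\sqrt2$ and $\beta=1$, the map is the orthogonal projection onto $k^\perp$ plus an offset along $k$. The image $((x-y)/2+c,\,(y-x)/2+c)$ has coordinate difference exactly $x-y$. So it acts as the identity on any state difference along $(1,-1)$, which is your own obstruction at work.

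\textbf{(ii) Snapping back by rounding only works at the scale of the grid.}
\begin{itemize}
\item In $\mathbb F_s$ every rounding moves a coordinate by at most $2^{-s-1}$. Away from saturation, each half-step is therefore $x\mapsto Lx+c+e$, with $L=\alpha(I-\beta kk^\top)$, $c$ independent of the state, and $\|e\|=O(2^{-s})$.
\item For $d=P_0-P_1$, a swap needs $L_2L_1d=-d+\epsilon$ with $\|\epsilon\|=O(2^{-s})$.
\item Each $L_i$ is symmetric with spectrum in $[0,1]$, up to $O(2^{-s})$ from the rounded key. Hence $\|L_iw-w\|^2\le\|w\|^2-\|L_iw\|^2$, up to such corrections.
\item Applying this to $w=d$ and then to $w=L_1d$ gives $2\|d\|-\|\epsilon\|\le 2\sqrt{2\|d\|\,\|\epsilon\|}$, so $\|d\|=O(2^{-s})$.
\end{itemize}
The two parity states must therefore be a bounded number of grid steps apart. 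No $s$-independent table with entries in $\{0,\pm\frac12,\pm1\}$ can work for all $s\ge2$. Using the second row does not help, because all rows are multiplied on the right by the same matrix.

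\textbf{The missing idea is to place the states at the grid scale $\delta=2^{-s}$.} Rounding of the individual products inside the update then turns a positive-$\beta$ projection into an effective reflection. The paper's convention (Appendix A) rounds every scalar operation of the GDN update, including the inner product $\langle x,k\rangle_s$, and its construction depends on these intermediate roundings. Your formula rounds only once, at the end, so the construction would not go through as you have set it up.

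\textbf{The paper's cell.} It uses one active row with $P_0=(0,\delta)$, $P_1=(\delta,0)$ and $\kappa=[1/\sqrt2]_s\in(\frac12,\frac34]$. Zero cells and query tokens apply the hold update $Z$, and a one cell applies $(G,F)$.
\begin{itemize}
\item $G$ has $\alpha=\frac34$, $\beta=\frac12$, $k=(\kappa,\kappa)$, $v=6\delta$. The rank-one correction dies because $[\frac12\delta]_s=0$. Together with $[\frac34\delta]_s=\delta$ and $[3\kappa\delta]_s=2\delta$, this sends $P_0\mapsto(2\delta,3\delta)$ and $P_1\mapsto(3\delta,2\delta)$.
\item $F$ has $\alpha=\frac14$, $\beta=\frac34$, $k=(-\kappa,\kappa)$, $v=0$. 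Since $[2\kappa\delta]_s=\delta$ and $[3\kappa\delta]_s=2\delta$, the rounded inner product is $\pm\delta$. The correction rounds up to $\pm\delta$ via $[\frac34\delta]_s=\delta$, and subtracting $\pm(-\delta,\delta)$ overshoots. This realizes the reflection $(2\delta,3\delta)\leftrightarrow(3\delta,2\delta)$.
\item Multiplying by $\frac14$, with $[\frac34\delta]_s=\delta$ and $[\frac12\delta]_s=0$, then lands on $P_1$ and $P_0$ respectively.
\end{itemize}
This explicit grid-scale swap is what your proposal still needs.
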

\begin{proof}
The construction is more or less standard, the details can be encountered in the Appendix B.
\end{proof}

\subsection{Putting it all together}

\begin{theorem}[Hybrid upper bound]
\label{thm:hybrid-upper}
For each $N$ there exists a constant-precision hybrid decoder with one $\mathsf{GDN}$ layer and two $\mathsf{GA}$ layers which solves the Parity-Conditioned Retrieval in single iteration (no scratchpad).  The model dimension $d_M$ is of order $O(\log N)$.
\end{theorem}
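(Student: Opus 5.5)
We assemble the decoder from Lemma~\ref{lem:gdn-toggle} and two applications of Lemma~\ref{lem:equality}, all on one residual stream of dimension $O(m_n)=O(\log N)$. The stream is split into a constant number of blocks. There is a block holding the token identity and phase flags: a bit-region first copy, a bit-region second copy, or the query region. There are blocks holding positional codes: for a bit token $B(Y_r)$ at positions $2r-1,2r$ we store the code $E_n(r)$, and for the query token at position $2n+j'$ we store $E_n(j')$. There is also a constant-size block for the parity and a block for a retrieved code. The embedding $\operatorname{Emb}_M$ writes these entries directly from $(z_i,i)$, which is legitimate since the embedding may depend arbitrarily on the position.

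\textbf{Layer 1 (GDN).} One head implements the parity cell of Lemma~\ref{lem:gdn-toggle} on the doubled-bit prefix. During the query region we set $\beta_i=0$ and $\alpha_i=1$ through the affine maps on the region flag, so the state is frozen after position $2n$. We must check that the appendix construction tolerates this, namely that $\alpha=1,\beta=0$ is realizable with the activations $\phi_\alpha,\phi_\beta$. If it is not, we keep feeding a neutral ``no-toggle'' key. The read-out $\gamma\odot\operatorname{RMSNorm}(Sq)$ gives a vector which the MLP maps by a finite lookup to a clean bit $p\in\{\pm1\}$ written into the parity block. The last position $3n$ therefore knows $p(Y)$. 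The other heads of this layer are zeroed.

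\textbf{Layer 2 (GA, locate the MARK).} At the last position we need the address $j$, which is carried by the unique $\MARK$ token. We apply Lemma~\ref{lem:equality} with query code $E_n(\MARK)$ at every position. For keys, $\MARK$ tokens use $E_n(\MARK)$, while every other token uses $E_n(\BLANK)$ or $E_n(\DUMMY)$. The token-type code is a fixed function of the token, so the key map is affine in the one-hot token block. The value is the positional code stored at that token, namely $E_n(j)$. Exactly one unmasked position matches, so the head returns $E_n(j)$ exactly, and this is added into the retrieved-code block. Non-final positions may see no match, which would give an empty softmax with all rounded weights zero. We avoid this by arranging that only the last position's output matters. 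A cleaner fix is to place a $\DUMMY$ key at position~1, as a fallback that the last position never prefers once its match exists. We do this by letting position 1 have key $E_n(\DUMMY)$ and query from every position equal to $E_n(\MARK)$, so the dummy never matches and is harmless.

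\textbf{Layer 3 (GA, fetch $Y_j$), then output.} We apply Lemma~\ref{lem:equality} again. The query at the last position is the retrieved code $E_n(j)$, which is an affine map of the retrieved block. The keys at bit positions are their stored codes $E_n(r)$. To guarantee uniqueness of the match we let only the first copy of each doubled bit carry $E_n(r)$, while the second copy and the query-region tokens carry $E_n(\DUMMY)$. The value is the bit $\pm1$. The head returns $Y_j$ exactly, and the final MLP computes $Y_j\oplus p$ by a finite lookup on two $\pm1$ coordinates. The rounded logit map then puts $\YES$ above $\NO$ exactly when $Y_j\oplus p=1$.

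\textbf{Main obstacle.} The delicate point is ensuring that every quantity is exact under the rounding conventions. The RMSNorm in the GA heads must leave the $\pm1$ codes unchanged, so the code coordinates must be stored exactly as $\pm1$ in $\mathbb F_s$. Values must pass through rounded softmax unchanged. The gate $\gamma$ must be absorbed by choosing $G$ constant and rescaling $W_O$. Finally, the residual additions must not collide with existing blocks, which we handle by writing only into blocks that were zero. We also need the GDN read-out to be separable by the MLP, so that the two parity states give distinct rounded vectors; this is what the appendix lemma should supply.
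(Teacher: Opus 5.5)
Your proposal is correct and matches the paper's construction step for step. It uses a GDN parity cell from Lemma~\ref{lem:gdn-toggle} held fixed over the query segment; the paper does this with the hold update $Z$ of Appendix~B, essentially your fallback, rather than $\alpha=1,\beta=0$. It then uses a GA head keyed on $E_n(\MARK)$ to fetch $E_n(j)$, a second GA head that matches $E_n(j)$ against the first copy of each table block with all other keys set to $E_n(\DUMMY)$, and a final pointwise XOR.

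Two small tidy-ups are needed. The position-1 dummy key does nothing and is unnecessary, since a zero softmax denominator already gives zero output under Appendix~A and only the last position matters. Also, the layer-3 key code should be its own block written by the embedding, not the shared positional code, because no affine key map can send $E_n(r)$ to itself on first copies but to $E_n(\DUMMY)$ on second copies.
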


\begin{proof}
Fix \(N=3n\), and let \(E_n\) be the address code from the hard-selection
subsection.  The embedding supplies token type, segment type, a value \(r\in\{1,2\}\) for the doubled table encoding, the local address code
\(E_n(\ell)\), and some \emph{protected residual coordinates}. All affine maps, output projections, and MLPs are chosen to act as identity on protected coordinates unless explicitly stated, and to output zero on irrelevant coordinates.

The \(\mathsf{GDN}\) layer computes the global parity.  It is initialized at
\(P_0\). For the first part of the input, a zero cell \(B(0)\) applies the two-token hold macro \((Z,Z)\), while a one
cell \(B(1)\) applies the two-token toggle macro \((G,F)\) from
Lemma~\ref{lem:gdn-toggle}. In the second part of the input (the query index segment) tokens apply the hold update \(Z\). Hence,
at the final prompt position, the recurrent state encodes \(p(Y)\), and the
readout from Lemma~\ref{lem:gdn-toggle} writes this bit into a protected
coordinate.

The first \(\mathsf{GA}\) layer retrieves the query address.  At the final
position, the query code is \(E_n(\MARK)\).  The unique marked token in the
query segment has key code \(E_n(\MARK)\), blank query tokens have key code
\(E_n(\BLANK)\), and table tokens have key code \(E_n(\DUMMY)\).  The value at
query offset \(\ell\) is \(E_n(\ell)\).  By Lemma~\ref{lem:equality}, the layer
writes \(E_n(j)\) exactly.

The second \(\mathsf{GA}\) layer retrieves the selected table bit.  Its query
code at the final position is the retrieved address \(E_n(j)\).  The first
token of table block \(\ell\) has key code \(E_n(\ell)\) and value \(Y_\ell\);
all other visible tokens have key code \(E_n(\DUMMY)\) and value zero.  Again by
Lemma~\ref{lem:equality}, this layer writes \(Y_j\) exactly.

A final rounded pointwise map computes \(Y_j\oplus p(Y)\) and the output layer
emits the corresponding answer token.  The GDN state has constant dimension and
the address code has length \(O(\log n)=O(\log N)\), so the model dimension is
\(O(\log N)\).
\end{proof}

\section{Pure Gated Attention lower bound}
\label{sec:ga-lower}

\begin{lemma}
\label{lem:ga-transcript-circuits}
Fix a constant-precision pure \(\mathsf{GA}\) decoder family of fixed depth, model dimension
\(d_M=O(\log M)\), and finite scratch alphabet \(\Gamma\).  There are constants
\(d_\star,c_\star\) such that, if the family recognizes a language $\mathcal{L}$ with a scratch budget \(S(N)\) in inputs of length $N$, then the accepted length-\(N\) instances
are recognized, under any fixed binary encoding of the finite vocabulary, by
circuits of depth \(d_\star\) and size
\[
  (N+S(N))^{c_\star}(|\Gamma|+1)^{S(N)+1}.
\]
\end{lemma}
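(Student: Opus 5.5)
The proof has two parts. First, a single forward pass (one next-token prediction) of a constant-precision pure $\mathsf{GA}$ decoder on a context of length $M$ is computed by a constant-depth circuit of size $M^{O(1)}$. Second, the greedy decoding transcript is handled by guessing it: an OR over all candidate scratch strings, each checked by an AND of next-token circuits.

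\textbf{Single-step circuits.} Fix $M\le N+S(N)$ and a target token $a\in\Gamma\cup\{\YES,\NO\}$. I will build a circuit $C_{M,a}(z_1,\ldots,z_M)$ that outputs $1$ iff the rounded greedy next token equals $a$. Every tensor crossing a module boundary lies in $\mathbb F_s$, which is a set of constant size. The embedding $h_i^{(0)}=\operatorname{Emb}_M(z_i,i)$ is therefore a lookup on the $O(1)$ input bits of $z_i$ with $i$ hardwired. Each coordinate of a position-wise map depends on the $d_M=O(\log M)$ coordinates of $h_i$. This covers the affine maps $Q,K,V,G$, the RMSNorm with its wide accumulator, the rounded MLP, and the gates. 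Such a coordinate is a Boolean function of $O(\log M)$ bits, so a DNF of size $2^{O(\log M)}=M^{O(1)}$ and depth $2$ computes it.

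\textbf{Attention step.} For a pair $(i,j)$, the rounded score $[s_{ij}]_s$ and the rounded $[\exp(s_{ij})]_s$ likewise depend on $O(\log M)$ bits. The wide accumulator is internal, and only its rounded output matters, so each is again a depth-$2$ lookup. The delicate part is the softmax denominator $\sum_{r\le i}[\exp(s_{ir})]_s$ and the value sum $\sum_j A_{ij}v_j$. These are sums of $M$ terms from a constant-size set, and exact counting is not in $\ACzero$. Following the approach of Li et al.~\cite{li2024cot}, I will use the precision convention of Appendix A: the result is rounded to $\mathbb F_s$ and saturates. Because each term is a nonnegative multiple of $2^{-s}$ and the total is then rounded with saturation, the rounded sum depends only on the counts of each term value, each truncated at a constant threshold $O(2^{2s})$. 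The predicate ``at least $t$ of the $M$ inputs have value $v$'' for constant $t$ is in $\ACzero$: take an OR over $\binom{M}{t}$ subsets of an AND. The value-mixing sum $\sum_j [A_{ij}]_s v_j$ is handled coordinatewise in the same way. I expect this to be the main obstacle. Making the ``only truncated counts matter'' claim rigorous under the paper's accumulator convention needs care: the terms are nonnegative in the denominator but signed in value mixing. If the wide accumulator makes signed sums non-monotone, I will fall back to proving that the rounded result is determined by truncated counts of each (positive part, negative part) class, which still yields an $\ACzero$ lookup. Composing constantly many layers gives depth $d_0$ and size $M^{c_0}$. The final affine readout and the tie-broken argmax are a finite lookup and add constant depth.

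\textbf{Transcript.} For input $x$ of length $N$, the decoder accepts iff there exist $k\le S(N)$ and $\gamma\in\Gamma^k$ satisfying two conditions. First, for every $t<k$, $C_{N+t,\gamma_{t+1}}(x\gamma_1\cdots\gamma_t)$ holds. Second, $C_{N+k,\YES}(x\gamma)$ holds. Because decoding is deterministic, exactly one candidate is the true transcript. The circuit is therefore an OR over all $\sum_{k\le S}|\Gamma|^k\le(|\Gamma|+1)^{S(N)+1}$ candidates. Each disjunct is an AND of at most $S(N)+1$ circuits of size $(N+S(N))^{c_0}$ with the scratch tokens hardwired as constants. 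The total depth is $d_\star=d_0+2$, and the size is $(N+S(N))^{c_\star}(|\Gamma|+1)^{S(N)+1}$, as claimed.
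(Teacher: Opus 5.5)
Your pointwise DNFs, layer-by-layer composition and transcript OR-of-ANDs match the paper. The attention step, which you flagged as the main obstacle, has a genuine gap. You treat the value mixture $\operatorname{sum}_s(A_{i1}\otimes_s v_1,\ldots,A_{ii}\otimes_s v_i)$ as a sum of up to $M$ signed terms. You then claim its rounded value depends only on truncated counts of term values, or, as a fallback, on truncated counts of positive and negative parts. For sums of unboundedly many signed terms this fails under either evaluation rule:
\begin{itemize}
\item Appendix A prescribes strict left-to-right rounding for value mixing, and saturation makes that order-dependent. For example, $\operatorname{sum}_s(B_s,B_s,-B_s)=0$ while $\operatorname{sum}_s(-B_s,B_s,B_s)=B_s$, with identical counts.
\item If the sum were exactly accumulated instead, truncated counts could not separate $c+1$ copies of $\delta_s$ plus $c$ copies of $-\delta_s$ from $c$ plus $c$. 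Deciding such signs is majority-like, not $\ACzero$.
\end{itemize}
You also misread the denominator. $D=\operatorname{sum}_{\rm acc}(e_{i1},\ldots,e_{ii})$ is exact and is never rounded or saturated, so ``the total is rounded with saturation'' is not why truncation works there.

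The missing idea is a bounded-support property of rounded softmax. Suppose some rounded probability $A_{ij}=[e_{ij}/D]_s$ is nonzero. Then $e_{ij}/D>\theta_s$, and since $e_{ij}\le B_s$ this forces $D<B_s/\theta_s$. Every positive $e_{ir}$ is at least $\delta_s$, so at most a constant $K_s$ positions have $e_{ir}>0$. There are therefore two cases:
\begin{itemize}
\item More than $K_s$ exponentials are positive. This is detectable by an OR over $(K_s+1)$-tuples, and then every probability rounds to $0$ and the head outputs $0$.
\item The support is a set $T$ with $|T|\le K_s$. You enumerate $T$ together with the values $e_{ir}=\eta_r$ for $r\in T$, giving $M^{O(1)}$ cases.
\end{itemize}
In each enumerated case, $D$ and all $A_{ij}$ are fixed constants, and zero terms leave the running rounded sum unchanged, since $[x+0]_s=x$. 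The value mixture is then a constant-size lookup on $|T|$ values in known position order. This handles both the denominator and the signed, order-sensitive value sum without any counting argument. With this substitution the rest of your proof goes through as in the paper.
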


The proof is deferred to Appendix C.

% \begin{lemma}[Transcript enumeration]
% \label{lem:transcripts}
% Let $D$ be a pure Gated Attention decoder with $d_0$ layers with scratch alphabet $\Gamma$.  If $D$ solves a promise problem on length-$N$ prompts with scratch budget $S(N)$, then the positive length-$N$ instances are recognized by $\ACzero$ circuits of depth at most $d_0+2$ and size
% \[
%   (N+S(N))^{O(1)}(|\Gamma|+1)^{S(N)+1}.
% \]
% \end{lemma}

% \begin{proof}
% Fix $N$ and write $S=S(N)$.  For a transcript $u=(u_1,\ldots,u_t)\in\Gamma^t$, where $0\le t\le S$, let $B_u(x)$ be the predicate that on prompt $x$ the decoder generates exactly $u_1,\ldots,u_t$ as scratch tokens and then emits $\YES$:
% \[
%   B_u(x)=
%   \left(\bigwedge_{r=1}^t [F_{N+r-1}(x,u_1,\ldots,u_{r-1})=u_r]\right)
%   \wedge [F_{N+t}(x,u_1,\ldots,u_t)=\YES].
% \]
% For fixed $u$, each next-token predicate has an $\ACzero$ circuit of depth at most $d_0$ and size at most $(N+S)^{c_0}$, with the transcript tokens hardwired.  Taking their conjunction adds one unbounded-fan-in AND layer and a polynomial size factor.  Thus $B_u$ has depth at most $d_0+1$ and size $(N+S)^{O(1)}$.

% The decoder accepts within its scratch budget iff
% \[
%   \bigvee_{t=0}^S\ \bigvee_{u\in\Gamma^t} B_u(x)
% \]
% holds.  The outer OR adds one layer and a factor at most
% \[
%   \sum_{t=0}^S |\Gamma|^t \le (|\Gamma|+1)^{S+1}.
% \]
% \end{proof}

\begin{theorem}[Pure Gated Attention lower bound]
\label{thm:ga-lower}
Every fixed-depth pure \(\mathsf{GA}\) decoder family with \(d_M=O(\log M)\)
and finite scratch alphabet that solves \(\PCR\) on length-\(N\) inputs while using
scratchpad of length \(S(N)\) satisfies, along \(N=3n\),
\[
  S(N)=N^{\Omega(1)}.
\]
\end{theorem}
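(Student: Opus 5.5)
The plan is to reduce parity to $\PCR$ by a restriction that fixes the query and can be hardwired into a circuit. Then Lemma~\ref{lem:ga-transcript-circuits} and H\aa stad's bound (Fact~\ref{fact:hastad}) together force the scratch budget to be polynomially large.

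\emph{Step 1: the reduction.} Fix $n$ and the query address $j=1$. Take $Y_1=0$, so that $\PCR_n(Y,1)=\YES$ iff $p(Y)=1$, i.e. iff $Y_2\oplus\cdots\oplus Y_n=1$. The prompt $\pi_n(Y,1)$ is the following string:
\[
  \BITZERO\,\BITZERO\,B(Y_2)\cdots B(Y_n)\,Q(1).
\]
Here $Q(1)$ and the first block are constants. Each free bit $Y_\ell$ appears twice, once in each token of $B(Y_\ell)$.

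\emph{Step 2: the circuit.} Suppose the pure $\mathsf{GA}$ family solves $\PCR$ with budget $S(N)$. Lemma~\ref{lem:ga-transcript-circuits} then gives a depth-$d_\star$ circuit $C_N$ of size
\[
  (N+S(N))^{c_\star}(|\Gamma|+1)^{S(N)+1}
\]
that accepts exactly the $\YES$ instances among valid length-$N$ prompts. The lemma talks about a language, but only its behaviour on promised prompts matters. Under the fixed binary encoding, each token of the prompt is a constant-length block of input wires. We hardwire constants into all wires of the fixed tokens. For the two tokens of $B(Y_\ell)$, each encoding wire becomes either a constant or $Y_\ell$ or $\neg Y_\ell$. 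This depends only on whether that encoding bit differs between $\BITZERO$ and $\BITONE$. NOT gates are pushed to the inputs at no cost in depth beyond a constant. The result is a circuit of depth $d_\star+O(1)$ and no larger size that computes parity on $n-1=N/3-1$ bits.

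\emph{Step 3: the comparison.} Fact~\ref{fact:hastad} gives the lower bound
\[
  (N+S)^{c_\star}(|\Gamma|+1)^{S+1}\ge 2^{\Omega(n^{1/(d-1)})},
\]
where $d=d_\star+O(1)$ is a constant. Taking logarithms yields
\[
  c_\star\log(N+S)+(S+1)\log(|\Gamma|+1)\ge c\,N^{1/(d-1)}.
\]
We split into two cases. If $S\ge N$, the claim $S=N^{\Omega(1)}$ already holds. If $S<N$, the first term is $O(\log N)$, which is negligible. Then $S\ge c' N^{1/(d-1)}-1$, and hence $S(N)=N^{\Omega(1)}$ with exponent $1/(d_\star+O(1)-1)$.

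The main obstacle is the restriction in Step 2. We must check that hardwiring preserves constant depth and polynomial size. We must also handle two mismatches carefully: the circuit from Lemma~\ref{lem:ga-transcript-circuits} only has to be correct on promised prompts, and an input bit feeds two token slots. Both are handled because every restricted input is a valid promised prompt, and wire duplication is free in circuits. I would also confirm that Lemma~\ref{lem:ga-transcript-circuits} applies to promise problems; its proof via transcript enumeration only uses the decoder's behaviour on the given prompts.
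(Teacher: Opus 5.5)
Your proposal is correct and follows the paper's proof essentially step for step. You use the same restriction $Y_1=0$, $j=1$ to reduce parity on $n-1$ bits to $\PCR_n$, then apply Lemma~\ref{lem:ga-transcript-circuits}, H\aa stad's bound, and a case split on the size of $S(N)$. Your additional care is sound and only spells out details the paper leaves implicit: handling the doubled two-token encoding under a binary alphabet encoding, noting the promise-versus-language mismatch, and using the explicit threshold $S\ge N$ in place of the paper's informal ``otherwise.''
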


\begin{proof}
Let \(N=3(r+1)\).  Given \(x=(x_1,\ldots,x_r)\in\{0,1\}^r\), set
\[
  Y_1=0,\qquad Y_{i+1}=x_i\quad(1\le i\le r),
\]
and choose query address \(j=1\).  Then
\[
  \PCR_{r+1}(Y,1)
  =
  Y_1\oplus(Y_1\oplus\cdots\oplus Y_{r+1})
  =
  x_1\oplus\cdots\oplus x_r.
\]
Thus parity on \(r\) bits is a length-linear projection of \(\PCR\), with
prompt length \(N=3(r+1)=\Theta(r)\).

If a pure \(\mathsf{GA}\) family solved \(\PCR\) with scratch budget \(S(N)\),
Lemma~\ref{lem:ga-transcript-circuits} would give depth-\(d_\star\) 
circuits for \(r\)-bit parity of size
\[
  (N+S(N))^{c_\star}(|\Gamma|+1)^{S(N)+1}.
\]
By H\aa stad's parity lower bound,
\[
  (N+S(N))^{c_\star}(|\Gamma|+1)^{S(N)+1}
  \ge
  2^{\Omega(r^{1/(d_\star-1)})}.
\]
Since \(r=\Theta(N)\), taking logarithms gives
\[
  c_\star\log(N+S(N))
  +(S(N)+1)\log(|\Gamma|+1)
  \ge
  \Omega(N^{1/(d_\star-1)}).
\]
If \(S(N)\) is already at least a fixed positive power of \(N\), we are done.
Otherwise \(\log(N+S(N))=O(\log N)\), which is lower order than the right-hand
side, and the inequality forces
\[
  S(N)=N^{\Omega(1)}.
\]
\end{proof}

\section{Pure Gated DeltaNet lower bound}
\label{sec:gdn-lower}

This section proves Item 3 of Theorem~\ref{thm:main}.  The argument only uses
the recurrent nature of pure Gated DeltaNet layers.  After a table prefix has
been read, all information about that prefix available to future tokens is
contained in the rounded recurrent states of the GDN heads.  

\begin{theorem}[Pure Gated DeltaNet  lower bound]
\label{thm:gdn-lower}
Let \(G\) be a pure \(\mathsf{GDN}\) decoder whose recurrent state after
reading $k$ tokens has \(D(k)\) scalar coordinates over a fixed finite
alphabet \(Q\).  If \(G\) solves \(\PCR_n\) on prompts \(\pi_n(Y,j)\), then
\[
  D(N)\log_2|Q|\ge n-1.
\]
\end{theorem}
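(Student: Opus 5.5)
Our plan is a one-way communication (fooling-set) argument that cuts the prompt at the boundary between the table segment and the query segment. Write the prompt as $\pi_n(Y,j)=T(Y)\,Q(j)$, where $T(Y)=B(Y_1)\cdots B(Y_n)$ has length $2n$.

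\textbf{Step 1: the state is a sufficient statistic.} In a pure $\mathsf{GDN}$ decoder, every layer is either a causal recurrence or a position-wise map. So the computation at any position $t>2n$ depends on the prefix $T(Y)$ only through the tuple $\Sigma(Y)$, which collects the rounded recurrent states $S^{a,\ell}_{2n}$ of all heads in all layers after position $2n$. We would prove this by induction on the layer index $\ell$.
\begin{itemize}
\item The embeddings of suffix tokens depend only on token identity and position.
\item Given $S^{a,\ell}_{2n}$ and the layer-$(\ell-1)$ outputs at positions $t>2n$, the recurrence determines all later states and readouts $u^a_t=S^a_tq^a_t$.
\item The gates, residual connections and MLPs are position-wise, so layer-$\ell$ outputs at positions $t>2n$ are determined as well.
\end{itemize}
The same holds for generated scratch tokens. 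Greedy decoding is deterministic, each emitted token is a function of earlier suffix computations, and that token is then fed back as a suffix token. Hence the entire continuation, including the first answer token, is a function $F(\Sigma(Y),j)$. This holds for an unbounded scratchpad, and the positional embeddings on the suffix are fixed once $N$ is fixed.

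\textbf{Step 2: count the possible states.} The tuple $\Sigma(Y)$ consists of at most $D(2n)\le D(N)$ scalars over $Q$; we use that $D$ is nondecreasing, or simply count coordinates at the cut position. So $\Sigma$ takes at most $|Q|^{D(N)}$ values.

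\textbf{Step 3: exhibit $2^{n-1}$ pairwise distinguishable prefixes.} Restrict to tables with a fixed parity, say $p(Y)=0$. There are $2^{n-1}$ such $Y$. Take two of them, $Y\ne Y'$, and pick $j$ with $Y_j\ne Y'_j$. Then
\[
Y_j\oplus p(Y)=Y_j\ne Y'_j=Y'_j\oplus p(Y'),
\]
so correctness forces $F(\Sigma(Y),j)\ne F(\Sigma(Y'),j)$, and hence $\Sigma(Y)\ne\Sigma(Y')$. Therefore $\Sigma$ is injective on this set, which gives $|Q|^{D(N)}\ge 2^{n-1}$. Taking logarithms yields $D(N)\log_2|Q|\ge n-1$.

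\textbf{Main obstacle.} Steps 2 and 3 are mechanical; the care goes into Step 1. We must check that nothing other than the recurrent states carries information across the cut. That means verifying:
\begin{itemize}
\item there is no attention-like cache;
\item the wider temporary accumulators allowed by the precision convention (RMS reductions) are local to a single position and so are not persistent;
\item the output-logit map reads only the current position.
\end{itemize}
The inductive statement we would prove is: for each layer $\ell$, the sequence $(h^\ell_t)_{t>2n}$ is a fixed function of the suffix tokens together with $(S^{a,\ell'}_{2n})_{\ell'\le\ell,\,a}$. It has to be made robust to generated tokens by also inducting over decoding steps.
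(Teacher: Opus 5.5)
Your proposal is correct and follows essentially the same route as the paper. You cut at the table/query boundary, argue that the rounded recurrent states of all GDN heads determine the entire (possibly scratch-extended) continuation, and count possible states against $2^{n-1}$ required behaviours. Your fixed-parity fooling set is the paper's observation that $R(Y)=Y\oplus p(Y)\mathbf 1^n$ has image of size at least $2^{n-1}$, restricted to the even-parity tables where $R$ is the identity. Also, you do not need monotonicity of $D$: a GDN state has the same shape at every position of a fixed decoder.
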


\begin{proof}
For a table \(Y\in\{0,1\}^n\), let \(S(Y)\in Q^{D(N)}\) be the full rounded
recurrent state after the table segment
\[
  B(Y_1)\cdots B(Y_n)
\]
has been read, but before the query segment is read.  Since the decoder is pure
\(\mathsf{GDN}\), future behavior on any fixed query suffix is determined by
this state.

Define
\[
  R(Y)=(R_1(Y),\ldots,R_n(Y)),
  \qquad
  R_j(Y)=Y_j\oplus p(Y).
\]
The map \(R(Y)=Y\oplus p(Y)\mathbf 1^n\) is linear over \(\mathbb F_2\).  Its
kernel has size at most \(2\): it contains \(0^n\), and also \(1^n\) only when
\(n\) is odd.  Hence
\[
  |\operatorname{im} R|\ge 2^{n-1}.
\]

If two tables \(Y,Y'\) satisfy \(S(Y)=S(Y')\), then appending the same query
segment \(Q(j)\) gives identical future computations.  Any generated scratch
tokens are deterministic functions of the same recurrent state and same suffix,
so the continuations remain identical until the first answer token.  Correctness
for every \(j\) therefore implies
\[
  R_j(Y)=R_j(Y')\quad\text{for all }j,
\]
and hence \(R(Y)=R(Y')\).

Thus the recurrent state must distinguish at least \(2^{n-1}\) different
response vectors.  But it has at most \(|Q|^{D(N)}\) possible values.  Therefore
\[
  |Q|^{D(N)}\ge 2^{n-1},
\]
which gives
\[
  D(N)\log_2|Q|\ge n-1.
\]
\end{proof}

\begin{proof}[Proof of Theorem~\ref{thm:main}]
Item 1 is Theorem~\ref{thm:hybrid-upper}.  Item 2 is Theorem~\ref{thm:ga-lower}. Item 3 is Theorem~\ref{thm:gdn-lower}.
\end{proof}

\newpage
\section*{Appendix}
\subsection*{Appendix A: Constant precision}
\begin{definition}[Constant precision]
Fix an integer \(s\ge2\).  Let
\[
  \mathbb F_s=\{0\}\cup\{\pm k2^{-s}:1\le k\le 2^{2s}-1\}.
\]
Let \(B_s=2^s-2^{-s}\) be the largest positive element of \(\mathbb F_s\).
The rounding map \([\cdot]_s:\mathbb R\to\mathbb F_s\) sends a real number to
the closest element of \(\mathbb F_s\), breaking ties by smaller absolute value
and saturating to \(\pm B_s\) on overflow.  Vectors and matrices are rounded
coordinatewise.
\end{definition}

For \(a,b\in\mathbb F_s\), define rounded scalar arithmetic by
\[
  a\oplus_s b=[a+b]_s,\qquad
  a\ominus_s b=[a-b]_s,\qquad
  a\otimes_s b=[ab]_s.
\]
For \(x=(x_1,\ldots,x_m)\in\mathbb F_s^m\), define the strictly rounded sum
\[
  \operatorname{sum}_s(x_1,\ldots,x_m)
  =(((x_1\oplus_s x_2)\oplus_s x_3)\cdots)\oplus_s x_m,
\]
with the evident convention for \(m=1\).  The corresponding strictly rounded
inner product is
\[
  \langle x,y\rangle_s
  =
  \operatorname{sum}_s(x_1\otimes_s y_1,\ldots,x_m\otimes_s y_m).
\]
This strictly rounded arithmetic is the default convention outside the
designated reductions below.  In particular, the Gated DeltaNet row update used
in Section~\ref{sec:upper} uses \(\langle\cdot,\cdot\rangle_s\).

\paragraph{Temporary reduction accumulators.}
For designated reductions, we write
\[
  \operatorname{sum}_{\rm acc}(x_1,\ldots,x_m)=\sum_{r=1}^m x_r
\]
for a temporary accumulator sum.  Equivalently, the accumulator has enough
range and precision to determine the same rounded result as the exact real
sum.  Since the accumulated operands are elements of the fixed finite set
\(\mathbb F_s\), an accumulator with \(O(\log m)\) integer bits suffices for
the ranges used below.  The accumulator value itself is discarded after the
operator returns its rounded output.

For \(x,y\in\mathbb F_s^m\), define the accumulator dot product
\[
  \langle x,y\rangle_{\rm acc}
  =
  \sum_{r=1}^m x_ry_r.
\]
The QK-normalized finite-precision score is
\[
  \operatorname{score}_s(x,y)
  =
  \left[
    \frac{\langle x,y\rangle_{\rm acc}}{\sqrt m}
  \right]_s.
\]
Thus the dot product may be accumulated before rounding, but the score consumed
by the exponential/softmax is again an element of \(\mathbb F_s\).

For RMS normalization, define
\[
  \rho(x)
  =
  \sqrt{\frac1m\operatorname{sum}_{\rm acc}(x_1^2,\ldots,x_m^2)}.
\]
If \(\rho(x)=0\), set \(\operatorname{RMSNorm}_s(x)=0\).  Otherwise set
\[
  \operatorname{RMSNorm}_s(x)_r
  =
  \left[\frac{x_r}{\rho(x)}\right]_s,
  \qquad r=1,\ldots,m.
\]
Again, the reduction computing \(\rho(x)\) may use a temporary accumulator, but
the normalized vector is rounded coordinatewise to \(\mathbb F_s^m\).

The \(\ell_2\)-normalization used for GDN keys and queries is defined
analogously.  Let
\[
  \rho_2(x)=\sqrt{\operatorname{sum}_{\rm acc}(x_1^2,\ldots,x_m^2)}.
\]
If \(\rho_2(x)=0\), the normalized vector is \(0\).  Otherwise its \(r\)-th
coordinate is
\[
  \left[\frac{x_r}{\rho_2(x)}\right]_s .
\]

Finally, for \(z=(z_1,\ldots,z_m)\in\mathbb F_s^m\), define rounded softmax by
\[
  e_r=[\exp(z_r)]_s\in\mathbb F_s,\qquad r=1,\ldots,m,
\]
\[
  D=\operatorname{sum}_{\rm acc}(e_1,\ldots,e_m).
\]
If \(D=0\), set \(\operatorname{softmax}_s(z)=0\).  If \(D>0\), set
\[
  \operatorname{softmax}_s(z)_r
  =
  \left[\frac{e_r}{D}\right]_s.
\]
The softmax denominator may therefore be accumulated in a wider temporary type,
but each attention probability is rounded back to \(\mathbb F_s\) before value
mixing.

In attention layers, value mixing uses these rounded probabilities:
\[
  u_i^a
  =
  \operatorname{sum}_s
  \bigl(
    A_{i1}^a\otimes_s v_1^a,\ldots,A_{ii}^a\otimes_s v_i^a
  \bigr),
\]
coordinate-wise.  
% Thus the model does not have access to an unrounded fused
% quantity of the form
% \[
%   \frac{\sum_j \exp(s_{ij})v_j}{\sum_j \exp(s_{ij})}.
% \]

\subsection*{Appendix B: Proof of Lemma~\ref{lem:gdn-toggle}}

We give the construction for the fixed precision \(s\ge2\).  Let
\[
  \delta=2^{-s},
  \qquad
  \kappa=\left[1/\sqrt2\right]_s .
\]
Then \(1/2<\kappa\le 3/4\).  Hence
\[
  [\kappa\delta]_s=\delta,\qquad
  [2\kappa\delta]_s=\delta,\qquad
  [3\kappa\delta]_s=2\delta,
\]
and
\[
  [(1/2)\delta]_s=0,\qquad
  [(3/4)\delta]_s=\delta,
\]
where ties are rounded toward smaller absolute value.

We use one \(\mathsf{GDN}\) head with \(d_h=2\).  Only the first row of the
state is active:
\[
  S_i=
  \begin{pmatrix}
    x_i\\
    0\;\;0
  \end{pmatrix},
  \qquad x_i\in\mathbb F_s^2 .
\]
The two parity states are
\[
  P_0=(0,\delta),
  \qquad
  P_1=(\delta,0),
\]
and the initial state is \(P_0\).

For one active row, the GDN update has the form
\[
  U_{\alpha,\beta,k,v}(x)
  =
  \alpha\left(x-\beta\langle x,k\rangle_s k\right)+\beta vk,
\]
with all arithmetic interpreted under the convention of Appendix~A.  We write
\(v\) for the first coordinate of the value vector; the second coordinate is
zero, so only the first row of the state is active.

We use three updates: a hold update \(Z\), and two phase updates \(G,F\).

The hold update is
\[
  \alpha_Z=3/4,\qquad
  \beta_Z=1/4,\qquad
  k_Z=(1,0),\qquad
  v_Z=0.
\]
It fixes both parity states.  Indeed, for \(P_0=(0,\delta)\), the inner product
with \(k_Z\) is \(0\), and \((3/4)\delta\) rounds to \(\delta\).  For
\(P_1=(\delta,0)\),
\[
  \beta_Z\langle P_1,k_Z\rangle_s=(1/4)\delta\mapsto 0,
\]
and again \((3/4)\delta\mapsto\delta\).  Hence
\[
  Z(P_0)=P_0,\qquad Z(P_1)=P_1.
\]

The first phase update is
\[
  \alpha_G=3/4,\qquad
  \beta_G=1/2,\qquad
  k_G=(\kappa,\kappa),\qquad
  v_G=6\delta.
\]
For both \(P_0\) and \(P_1\),
\[
  \langle P_i,k_G\rangle_s=\delta,
  \qquad
  \beta_G\langle P_i,k_G\rangle_s=(1/2)\delta\mapsto 0.
\]
Thus the rank-one correction vanishes on both parity states.  Also
\[
  \beta_G v_G=(1/2)(6\delta)=3\delta,
\]
and
\[
  3\delta k_G=(2\delta,2\delta),
\]
using \([3\kappa\delta]_s=2\delta\).  Therefore
\[
  G(P_0)=(2\delta,3\delta)=:H_0,
  \qquad
  G(P_1)=(3\delta,2\delta)=:H_1.
\]

The second phase update is
\[
  \alpha_F=1/4,\qquad
  \beta_F=3/4,\qquad
  k_F=(-\kappa,\kappa),\qquad
  v_F=0.
\]
For \(H_0=(2\delta,3\delta)\),
\[
  \langle H_0,k_F\rangle_s
  =
  -[2\kappa\delta]_s+[3\kappa\delta]_s
  =
  -\delta+2\delta
  =
  \delta.
\]
Hence
\[
  \beta_F\langle H_0,k_F\rangle_s=(3/4)\delta\mapsto\delta,
\]
and
\[
  H_0-\delta k_F
  =
  (2\delta,3\delta)-(-\delta,\delta)
  =
  (3\delta,2\delta).
\]
Multiplying by \(\alpha_F=1/4\) gives
\[
  F(H_0)=(\delta,0)=P_1.
\]

Similarly, for \(H_1=(3\delta,2\delta)\),
\[
  \langle H_1,k_F\rangle_s
  =
  -[3\kappa\delta]_s+[2\kappa\delta]_s
  =
  -2\delta+\delta
  =
  -\delta.
\]
Thus
\[
  \beta_F\langle H_1,k_F\rangle_s\mapsto -\delta,
\]
and
\[
  H_1-(-\delta)k_F
  =
  (3\delta,2\delta)-(\delta,-\delta)
  =
  (2\delta,3\delta).
\]
Multiplying by \(\alpha_F=1/4\) gives
\[
  F(H_1)=(0,\delta)=P_0.
\]

Thus, the two-token macro-update \((G,F)\) swaps \(P_0\) and \(P_1\), while
\((Z,Z)\) preserves both states.  The doubled bit encoding is implemented by
\[
  B(0)\mapsto (Z,Z),
  \qquad
  B(1)\mapsto (G,F).
\]
Query-segment tokens also use the hold update \(Z\), so the parity state is
preserved after the table segment.

Finally, read the state using the normalized query
\[
  q_{\rm par}=(1,0).
\]
Then \(S_iq_{\rm par}=0\) in state \(P_0\), while
\[
  S_iq_{\rm par}=(\delta,0)
\]
in state \(P_1\).  After RMS normalization, zero remains zero and the nonzero
read vector remains nonzero, so the two states are distinguishable.  The output
gate is chosen to have a strictly positive post-rounded value on the active
readout coordinate, and a fixed position-wise MLP writes the parity bit into a
protected residual coordinate.

\subsection*{Appendix C: Proof of Lemma~\ref{lem:ga-transcript-circuits}}
\label{app:ga-transcript-circuits}

\begin{proof}
Fix a context length \(M\).  We first show that, for every token
\(a\in\Sigma\), the one-step predicate
\[
  \operatorname{Next}_M(z_1,\ldots,z_M)=a
\]
is computed by circuits of constant depth and size
\(M^{O(1)}\).

The fact that \(\Sigma\) and \(\Gamma\) need not be binary causes no difficulty.
Since the vocabulary is finite, fix a binary encoding of \(\Sigma\) using
\[
  b_\Sigma=\lceil\log_2|\Sigma|\rceil=O(1)
\]
bits per token.  A predicate such as ``the next token is \(a\)'' is therefore a
Boolean predicate over \(O(1)\) output bits.  For example, a four-symbol scratch
alphabet can be represented using two bits.  Different fixed encodings change
only constant factors in depth and size.

We prove the one-step simulation by induction over layers.  For every layer
\(\ell\), position \(i\), coordinate \(r\), and rounded value
\(b\in\mathbb F_s\), we construct a constant-depth polynomial-size circuit for
the predicate
\[
  h_i^\ell[r]=b.
\]
The base case is immediate: the embedding is a finite-valued function of the
token at position \(i\) and the hardwired position \(i\).

Pointwise rounded operations are finite lookups on \(O(d_M)=O(\log M)\) rounded
inputs.  This covers affine maps, gates, position-wise MLPs, RMS-normalized
coordinates, and QK scores.  Even when an RMS reduction or a QK dot product uses
a temporary accumulator, the value exposed to later computation is rounded back
to the fixed finite set \(\mathbb F_s\).  Hence each output-value predicate can
be represented by a DNF over all assignments to \(O(\log M)\) finite-valued
inputs, of size
\[
  |\mathbb F_s|^{O(\log M)}=M^{O(1)}.
\]

It remains to handle the softmax over the causal prefix.  Let
\[
  e_{ij}=[\exp(s_{ij})]_s.
\]
The values \(e_{ij}\) lie in a fixed finite subset of
\(\mathbb F_s^{\ge0}\).  Let \(\delta_s>0\) be the smallest positive element of
\(\mathbb F_s\), and let \(\theta_s>0\) be the positive rounding threshold below
which a nonnegative real rounds to zero.  If the rounded attention probability
\[
  A_{ij}
  =
  \left[
    \frac{e_{ij}}{\sum_{r\le i}e_{ir}}
  \right]_s
\]
is nonzero, then
\[
  \frac{e_{ij}}{\sum_{r\le i}e_{ir}}>\theta_s.
\]
Since \(e_{ij}\le B_s\), the softmax denominator is \(O_s(1)\).  Because every
positive \(e_{ir}\) is at least \(\delta_s\), there is a constant \(K_s\),
depending only on the precision, such that whenever any rounded attention
probability is nonzero, at most \(K_s\) positions have \(e_{ir}>0\).

Thus each attention output coordinate can be computed by enumerating constant
size active sets.  For every set \(T\subseteq[i]\) with \(|T|\le K_s\), and
every assignment of positive rounded exponential values
\(\eta_r\in\mathbb F_s^{>0}\) for \(r\in T\), consider the event
\[
  \bigwedge_{r\in T}(e_{ir}=\eta_r)
  \wedge
  \bigwedge_{r\le i,\ r\notin T}(e_{ir}=0).
\]
There are only \(M^{O(K_s)}=M^{O(1)}\) such cases.  On each case, the softmax
denominator is fixed, every rounded attention probability is a fixed element of
\(\mathbb F_s\), and the value mixture depends on only \(O_s(1)\) value
coordinates.  Therefore the corresponding output predicate is a constant-size
finite lookup over already constructed predicates.

If more than \(K_s\) exponentials are positive, then all rounded attention
probabilities are zero.  This case is also recognized in \(\ACzero\), by an OR
over \((K_s+1)\)-tuples of positions with positive exponentials.  Hence
attention-output predicates have constant-depth polynomial-size circuits.

Composing over the fixed number of layers preserves constant depth and
polynomial size.  The final rounded logits and deterministic tie-broken argmax
are finite lookups, so the one-step predicate
\[
  \operatorname{Next}_M(z_1,\ldots,z_M)=a
\]
has constant-depth polynomial-size Boolean circuits.

We now enumerate all possible scratch completions.  Fix a length-\(N\) input and a candidate
scratch completion
\[
  \tau=(\gamma_1,\ldots,\gamma_t),
  \qquad t\le S(N),
\]
where each \(\gamma_r\in\Gamma\).  The symbols of \(\tau\) are hardwired
constants in the circuit.  The completion is consistent and accepting exactly
when
\[
  \operatorname{Next}_{N+r-1}
  (x,\gamma_1,\ldots,\gamma_{r-1})
  =
  \gamma_r
  \qquad 1\le r\le t,
\]
and finally
\[
  \operatorname{Next}_{N+t}
  (x,\gamma_1,\ldots,\gamma_t)
  =
  \YES.
\]
Each of these \(t+1\) checks has constant depth and size polynomial in
\(N+S(N)\).  Their conjunction adds one unbounded-fan-in AND layer and only a
polynomial size factor.

The decoder accepts within its scratch budget iff at least one accepting
transcript exists.  Therefore we OR over all transcripts of length at most
\(S(N)\).  The number of such transcripts is
\[
  \sum_{t=0}^{S(N)}|\Gamma|^t
  \le
  (|\Gamma|+1)^{S(N)+1}.
\]
The outer OR adds one more layer and the displayed multiplicative size factor.
Absorbing polynomial factors into constants gives depth \(d_\star\) and size
\[
  (N+S(N))^{c_\star}(|\Gamma|+1)^{S(N)+1}.
\]
\end{proof}

% \newpage
% \input{checklist.tex}


\begin{thebibliography}{99}

\bibitem{angluin2023masked}
Andy Yang, David Chiang, and Dana Angluin.
\newblock Masked hard-attention transformers recognize exactly the star-free languages.
\newblock In \emph{Advances in Neural Information Processing Systems}, 2024.

\bibitem{barcelo2023logical}
Pablo Barcel\'o, Alexander Kozachinskiy, Anthony Widjaja Lin, and Vladimir V. Podolskii.
\newblock Logical languages accepted by transformer encoders with hard attention.
\newblock In \emph{International Conference on Learning Representations}, 2024.

\bibitem{grazzi2025negative}
Riccardo Grazzi, Julien Siems, J\"org K. H. Franke, Arber Zela, Frank Hutter, and Massimiliano Pontil.
\newblock Unlocking state-tracking in linear RNNs through negative eigenvalues.
\newblock In \emph{International Conference on Learning Representations}, 2025.

\bibitem{hahn2020theoretical}
Michael Hahn.
\newblock Theoretical limitations of self-attention in neural sequence models.
\newblock \emph{Transactions of the Association for Computational Linguistics}, 8:156--171, 2020.

\bibitem{hao2022formal}
Yiding Hao, Dana Angluin, and Robert Frank.
\newblock Formal language recognition by hard attention transformers: Perspectives from circuit complexity.
\newblock \emph{Transactions of the Association for Computational Linguistics}, 10:800--810, 2022.

\bibitem{jiang2025softmax_tc}
Hongjian Jiang, Michael Hahn, Georg Zetzsche, and Anthony Widjaja Lin.
\newblock Softmax Transformers are Turing-Complete.
\newblock In \emph{International Conference on Learning Representations}, 2026.

\bibitem{kozachinskiy2026parity}
Alexander Kozachinskiy, Tomasz Steifer, and Przemys{\l}aw Wa{\l}\k{e}ga.
\newblock Parity, sensitivity, and transformers.
\newblock \emph{arXiv preprint arXiv:2602.05896}, 2026.

\bibitem{merrill2022saturated}
William Merrill, Ashish Sabharwal, and Noah A. Smith.
\newblock Saturated transformers are constant-depth threshold circuits.
\newblock \emph{Transactions of the Association for Computational Linguistics}, 10:843--856, 2022.

\bibitem{merrill2023parallelism}
William Merrill and Ashish Sabharwal.
\newblock The parallelism tradeoff: Limitations of log-precision transformers.
\newblock \emph{Transactions of the Association for Computational Linguistics}, 11:531--545, 2023.

\bibitem{merrill2024cot}
William Merrill and Ashish Sabharwal.
\newblock The expressive power of transformers with chain of thought.
\newblock In \emph{International Conference on Learning Representations}, 2024.

\bibitem{merrill2026olmo_hybrid}
William Merrill, Yanhong Li, Tyler Romero, Anej Svete, Caia Costello, Pradeep Dasigi, Dirk Groeneveld, David Heineman, Bailey Kuehl, Nathan Lambert, Chuan Li, Kyle Lo, Saumya Malik, DJ Matusz, Benjamin Minixhofer, Jacob Morrison, Luca Soldaini, Finbarr Timbers, Pete Walsh, Noah A. Smith, Hannaneh Hajishirzi, and Ashish Sabharwal.
\newblock Olmo Hybrid: From theory to practice and back.
\newblock \emph{arXiv preprint arXiv:2604.03444}, 2026.

\bibitem{qwen3next_code}
Qwen Team and Hugging Face.
\newblock PyTorch implementation of Qwen3-Next, \texttt{modular\_qwen3\_next.py}.
\newblock Hugging Face Transformers, 2025.
\newblock \url{https://github.com/huggingface/transformers/blob/main/src/transformers/models/qwen3_next/modular_qwen3_next.py}.

\bibitem{siegelmann1995computational}
Hava T. Siegelmann and Eduardo D. Sontag.
\newblock On the computational power of neural nets.
\newblock \emph{Journal of Computer and System Sciences}, 50(1):132--150, 1995.

\bibitem{siegelmann1996finite}
Hava T. Siegelmann.
\newblock Recurrent neural networks and finite automata.
\newblock \emph{Computational Intelligence}, 12(4):567--574, 1996.

\bibitem{siems2025deltaproduct}
Julien Siems, Timur Carstensen, Arber Zela, Frank Hutter, Massimiliano Pontil, and Riccardo Grazzi.
\newblock DeltaProduct: Improving state-tracking in linear RNNs via Householder products.
\newblock \emph{arXiv preprint arXiv:2502.10297}, 2025.

\bibitem{strobl2024survey}
Lena Strobl, William Merrill, Gail Weiss, David Chiang, and Dana Angluin.
\newblock What formal languages can transformers express? A survey.
\newblock \emph{Transactions of the Association for Computational Linguistics}, 12:543--561, 2024.

\bibitem{svete2024lower}
Anej Svete, Franz Nowak, Anisha Mohamed Sahabdeen, and Ryan Cotterell.
\newblock Lower bounds on the expressivity of recurrent neural language models.
\newblock In \emph{Proceedings of the 2024 Conference of the North American Chapter of the Association for Computational Linguistics}, pages 6820--6844, 2024.

\bibitem{yang2024counting}
Andy Yang and David Chiang.
\newblock Counting like transformers: Compiling temporal counting logic into softmax transformers.
\newblock \emph{arXiv preprint arXiv:2404.04393}, 2024.

\bibitem{furst1984}
Merrick Furst, James B. Saxe, and Michael Sipser.
\newblock Parity, circuits, and the polynomial-time hierarchy.
\newblock \emph{Mathematical Systems Theory}, 17:13--27, 1984.

\bibitem{hastad1986}
Johan H\aa stad.
\newblock Almost optimal lower bounds for small depth circuits.
\newblock In \emph{Proceedings of the 18th Annual ACM Symposium on Theory of Computing}, pages 6--20, 1986.

\bibitem{li2024cot}
Zhiyuan Li, Hong Liu, Denny Zhou, and Tengyu Ma.
\newblock Chain of thought empowers transformers to solve inherently serial problems.
\newblock In \emph{International Conference on Learning Representations}, 2024.

\bibitem{qwen3next_modelcard}
Qwen Team.
\newblock Qwen3-Next-80B-A3B-Instruct model card.
\newblock Hugging Face, 2025.
\newblock \url{https://huggingface.co/Qwen/Qwen3-Next-80B-A3B-Instruct}.

\bibitem{yang2025gdn}
Songlin Yang, Jan Kautz, and Ali Hatamizadeh.
\newblock Gated Delta Networks: Improving Mamba2 with delta rule.
\newblock In \emph{International Conference on Learning Representations}, 2025.

\end{thebibliography}
\end{document}